%% file: submission.tex
\documentclass{article}

\usepackage{microtype}
\usepackage{graphicx}
\usepackage{subcaption}
\usepackage{booktabs} 
\usepackage{longtable}
\usepackage{siunitx}  
\usepackage{tabularx}
\usepackage{multirow}   
\usepackage[autostyle=true]{csquotes}
\usepackage{tabularx}
\usepackage{needspace}
\usepackage{dblfloatfix}

\newcolumntype{Y}{>{\centering\arraybackslash}X} 

\usepackage{array}
\usepackage{float}
\usepackage{placeins}
\usepackage{url}
\usepackage{xcolor}
\usepackage{amsfonts}
\usepackage[linesnumbered,ruled,vlined]{algorithm2e}
\usepackage[table]{xcolor}

\newcommand{\mnote}[1]{}
\renewcommand{\mnote}[1]{\textcolor{blue}{\textbf{[Note: #1]}}}

\newcommand{\R}{\mathbb{R}}

\usepackage{hyperref}

\usepackage[accepted]{icml2026}

\usepackage{amsmath}
\usepackage{amssymb}
\usepackage{mathtools}
\usepackage{amsthm}

\usepackage{siunitx}
\usepackage[capitalize,noabbrev]{cleveref}

\theoremstyle{plain}
\newtheorem{theorem}{Theorem}[section]
\newtheorem{proposition}[theorem]{Proposition}
\newtheorem{lemma}[theorem]{Lemma}

\theoremstyle{definition}
\newtheorem{definition}[theorem]{Definition}

\theoremstyle{remark}

\usepackage[textsize=tiny]{todonotes}
\counterwithout{table}{section}
\counterwithout{figure}{section}

\renewcommand{\thetable}{\arabic{table}}
\renewcommand{\thefigure}{\arabic{figure}}

\icmltitlerunning{Convex Low-resource Accent-Robust Language Detection in Speech Recognition}

\begin{document}

\twocolumn[
  \icmltitle{Convex Low-resource Accent-Robust Language Detection in Speech Recognition}



  \icmlsetsymbol{equal}{*}

  \begin{icmlauthorlist}
    \icmlauthor{Miria Feng}{yyy}
    \icmlauthor{William Tan}{comp}
    \icmlauthor{Mert Pilanci}{yyy}

  \end{icmlauthorlist}

  \icmlaffiliation{yyy}{Department of Electrical Engineering,}
  \icmlaffiliation{comp}{Department of Computer Science, Stanford University, California, United States}

  \icmlcorrespondingauthor{Miria Feng}{miria00@stanford.edu}

  \icmlkeywords{Machine Learning, ICML}

  \vskip 0.3in
]



\printAffiliationsAndNotice{}  

\begin{abstract}
Globalization and multiculturalism continue to produce increasingly diverse speech varieties. Yet current spoken dialogue systems frequently fail on under-represented dialects and accents, often misidentifying the input language and causing cascading failures in downstream dialogue tasks. Addressing this dialectal variance under low-resource constraints remains an open challenge, as standard fine-tuning is computationally expensive and prone to overfitting on high-dimensional speech data. We propose Convex Language Detection (CLD), a novel framework that integrates theoretically grounded convex optimization techniques into the spoken dialogue systems pipeline. Our method is efficiently implemented via multi-GPU Alternating Direction Method of Multipliers (ADMM) in JAX, thus providing global optimality guarantees and fast training in polynomial time. Theoretically, we prove that our convex objective induces certified margin stability and provide guarantees against feature perturbations. Empirically, we demonstrate sample efficiency and robustness to input dialectical variation, achieving 97–98\% accuracy in challenging low-resource regimes. Our open-source package is available at \url{https://pypi.org/project/jaxcld/}.
\end{abstract}

\input{1_introduction}
\input{2_related_work}
\input{3_method}
\input{theory_new2}
\input{5_exp}

\input{6_conclusion}

\subsection*{Acknowledgments}
This work was supported in part by the National Science Foundation (NSF) CAREER Award under Grant CCF-2236829, in part by the National Institutes of Health under Grant 1R01AG08950901A1, in part by the Office of Naval Research under Grant N00014-24-1-2164, and in part by the Defense Advanced Research Projects Agency under Grant HR00112490441. In addition, Miria Feng was supported in part by the Stanford Graduate Fellowship.

We thank Lucy Woof, Andrew Maas, and students from the CS224S class at Stanford University for valuable feedback and many insightful discussions.

\section*{Impact statement} Our work introduces a practical, efficient, and effective method for improving accessibility to spoken dialogue models for global users. Spoken dialogue models are becoming increasingly ubiquitous in our daily lives, yet access is often suboptimal due to lack of understanding of input dialect. We aim to take a step towards democratizing models, and increase equitable access to a broader society of diverse speech variations, while encouraging more research in developing machine learning tools for the global audience. 

\bibliography{references}
\bibliographystyle{icml2026}

\newpage
\appendix
\renewcommand{\thetable}{\thesection.\arabic{table}}
\renewcommand{\thefigure}{\thesection.\arabic{figure}}
\setcounter{table}{0}
\setcounter{figure}{0}
\onecolumn

\input{appendix}

\end{document}

%% file: 1_introduction.tex
\section{Introduction}
\label{sec:introduction}
\par
Spoken language dialogue systems are now ubiquitous across cultures, countries, and applications. From multimodal agents to everyday voice assistants such as Siri \citep{siri}, Google Assistant \citep{googleassistant}, and Amazon Echo \citep{amazonecho}, conversational user interfaces are becoming increasingly essential in daily life. The critical shared component among these systems is Automatic Speech Recognition (ASR), which transcribes user speech input signals into text for downstream processing by Large Language Models (LLMs). Without accurate transcripts, even the most capable LLMs struggle to infer intent or generate reliable responses. This persistent performance gap between speech input and text input has motivated growing interest in developing more robust ASR systems. For example, recent model families such as Whisper \citep{radford2023robust} and Massively Multilingual Speech (MMS; \cite{pratap2024scaling}) demonstrate strong zero-shot generalization across domains, yet frequently misidentify input language when confronted with real-world human speech, which presents diverse accents and dialectal variation \citep{kuhn2024measuring, graham2024evaluating}. 

This limitation arises in part since existing voice-transcription datasets rarely annotate fine-grained human speech intonations, leading to systematic under-representation of regional dialects even within high-resource languages. 
\citet{ethnologue2025} notes approximately \num{380} million people globally speak English as their first language, with the majority of these speakers using English as a second language, over \num{600} million people are native Hindi speakers, over \num{1.3} billion people speak various dialects of Chinese, and over \num{950} million people speak in various Southeast Asian dialects. One notable example is ``Singlish" --- the colloquial dialect of Singaporean accented English
\citep{wee2018singlish} --- whose distinct intonation and prosody \citep{goh2016anatomy,
hoon2003you, rubdy2007singlish} results in frequent mistranscription into neighboring languages of Bahasa or Tamil \citep{le1984retrospect, rajan2018tamil}, even by state-of-the-art ASR systems. 

Addressing this challenge is technically difficult, since speech patterns vary across age, gender, cultural background, and multilingual experience \citep{na2024comparative}. Furthermore, spontaneous speech often includes code-switching, disfluencies, and domain-specific vocabulary, all of which further complicate language identification and transcription. These issues create a persistent mismatch between training distributions and real-world applications, which frequently leads to cascading errors in downstream dialogue tasks even as model capacity continues to scale. Given ongoing global trends toward multicultural and multilingual societies, such failures may disproportionately affect millions of users and raise pressing concerns regarding accessibility, inclusion, and user trust \citep{sharma2022hey, mcguire2025automatic}. 
Recent research has begun to examine dialectal gaps across languages \citep{kantharuban2023quantifying} and explore improved user experiences in multilingual human–computer interaction \citep{li2023improving, cumbal2024you}. 


Despite this momentum, progress in spoken dialogue systems continues to lag behind text-only language modeling due to the comparative scarcity of comprehensive voice data. Audio is significantly more expensive to collect and curate than text, requiring strict quality control, privacy considerations, and access to real human participants. As a result, modern ASR performance is increasingly constrained by a lack of available training data \citep{beaulieu2021data}, and researchers often utilize an unchanging set of established speech corpora \citep{serban2015survey}. These resource constraints help explain why, despite rapid gains in LLM scaling capabilities, robust ASR for diverse real-world speech remains a challenging and important open problem. Therefore one current paradigm is to drive progress in speech models by maximizing the signal extracted from limited and low-resource regimes by utilizing sample efficient algorithms \citep{jimerson2023unhelpful}. 

In this paper, we aim to take a step toward democratizing access to spoken dialogue systems that robustly handle user speech input across multicultural backgrounds. We introduce the \textbf{C}onvex \textbf{L}anguage \textbf{D}etection (\textbf{CLD}) framework, which leverages theoretically grounded convex optimization techniques for robust language detection under dialectal variation. Our method achieves global optimality in polynomial time, and demonstrates improved sample efficiency with stronger generalization guarantees. This efficiency is essential for end-to-end spoken dialogue systems, which must maintain sub-500ms latency to preserve natural human conversational timing \citep{meyer2023timing}. We further optimize for fast training and inference by implementing our method in JAX \citep{bradbury2021jax} and solving the foundational convex program using Alternating Direction Method of Multipliers (ADMM) techniques \citep{boyd2011distributed}. To the best of our knowledge, this represents the first practical application of convex optimization reformulations on speech dialogue systems for language identification.\par

Our main contributions are summarized as follows:
\begin{itemize}
    \item We propose \textbf{C}onvex \textbf{L}anguage \textbf{D}etection (CLD), a fast sample-efficient algorithm for robust spoken language classification within low-resource data regimes. We demonstrate CLD’s strong efficiency in the critical and challenging dialect-identification task in ASR models. Section \ref{sec:method} formally introduces the CLD algorithm and methodology.
    \item We recast the CLD network as a convex program and prove certified robustness. By characterizing the variation norm we derive exact logit-Lipschitz constants and prove certified margin stability against hidden-feature perturbations. This provides a computable, data-dependent certificate of label invariance, ensuring that the model's predictions remain stable within a guaranteed radius in Section \ref{sec:theory}.
    \item We validate CLD's empirical performance with expansive experiments across five languages and twenty-four sub-dialects. Notably, CLD remains performant on training datasets with less than one hundred samples. In large model experiments such as Whisper Large v3 and MMS-1B, CLD achieves 97-98\% accuracy in low-resource regimes and consistently outperforms competitors. Results are presented in Section \ref{sec:exp}. 
    \item Our \texttt{pip installable} JAX package\footnote{\url{https://pypi.org/project/jaxcld/}} and open-source code \footnote{\url{https://github.com/pilancilab/CLD}} is provided for ease of reproducibility in continued research. This also aims to support global inclusivity efforts and equitable access to speech driven tools by offering a deployable robust plug-in module. 
\end{itemize}


%% file: 2_related_work.tex
\section{Related Work}
\label{sec:relatedwork}
\paragraph{Multilingual Tasks.} Foundational multilingual ASR models such as Whisper has been trained on more than \num{99} languages \citep{radford2023robust}. However the vast majority of these models perform best on English, with performance dropping significantly on lower resource languages due to lack of training data \citep{graham2024evaluating}. This has recently encouraged much work in the field of improving low-resource ASR performance. For example, \citet{bansal2018pre}, \citet{khare2021low}, and \citet{stoian2020analyzing} propose using transfer learning to improve cross-lingual performance. This requires large amounts of speech data in high-resource languages but with text transliterated to the target low-resource language. The mapping serves to encourage increased sharing between the output spaces of both languages, yet the efficacy is not well defined since the languages must share a certain amount of \enquote{basis similarity} in linguistics for this to be feasible. During pretraining the base ASR model may also experience catastrophic forgetting, leading to overall deterioration in performance. 

\paragraph{Low-resource Environments.} Even within high resource languages such as English and Mandarin, there exist many dialects which state-of-the-art ASR models struggle to identify correctly. The recent works of \citet{li2024chinese}, \citet{weninger2019deep}, and \citet{wang2025advancing} aim to implement prosody-assisted speech systems, or bidirectional Long-Short-Term Memory networks to better model acoustic context. With the rise in popularity of spoken dialogue models, other researchers \citep{reitmaier2022opportunities} have focused on more clearly identifying the challenges ASR models face with low-resource languages. These methods share the common weakness of being heavily dependent on large fine-tuning datasets with a learning rate that is typically ten times smaller than standard supervised fine-tuning learning rates \citep{wilson2001need, liu2024exploration, de2025whisper}.\par 

\paragraph{Certified Robustness and Lipschitz analysis.}
Prior work on robustness certification typically studies the local Lipschitz behavior of already-trained non-convex networks. CLEVER \citep{weng2018evaluating} formulates adversarial robustness
evaluation as local Lipschitz estimation and uses extreme value theory to obtain an attack-agnostic robustness score for large neural networks. Related perturbation-based analyses have also been
studied in speech processing, including robustness and privacy settings before downstream alignment \citep{yang2023perturbation}. Complementary work examines stability under observational interference \citep{yang2022training} and structural
network motifs \citep{zhang2025leveraging}. In contrast, our setting in this work does not estimate a black-box local constant after training. Instead, we leverage the convex reformulation to yield a constructive, data-dependent upper bound on the detection
head's variation norm, which directly bounds logit perturbations and gives an explicit hidden-feature margin certificate.

\paragraph{Convex Programs.}
Convex reformulations of two-layer neural networks have been well studied by: \citet{pilanci2020neural, bach2017breaking, bengio2005convex}. These reformulations offer polynomial-time convergence to global optima, and seek to mitigate the largely heuristics-driven optimization techniques on non-convex landscapes \citep{ergen2021convex, sahiner2021vector}. However, prior work has largely focused on theoretical properties or small-scale image benchmarks. The recently introduced CRONOS algorithm \citep{feng2024cronos} demonstrates promising execution of convex networks for binary language classification tasks at the scale of GPT-2 \citep{radford2019language}. 
In this work, we scale convex training to multi-class high-dimensional speech representations in data-scarce environments, demonstrating the practical and significant gains of convex methods in real-world spoken dialogue systems. 

Related work discussion continues in Appendix \ref{app:related}.

%% file: 3_method.tex
\section{Methodology}

\begin{figure*}[t]
    \centering
    \includegraphics[width=0.9\linewidth]{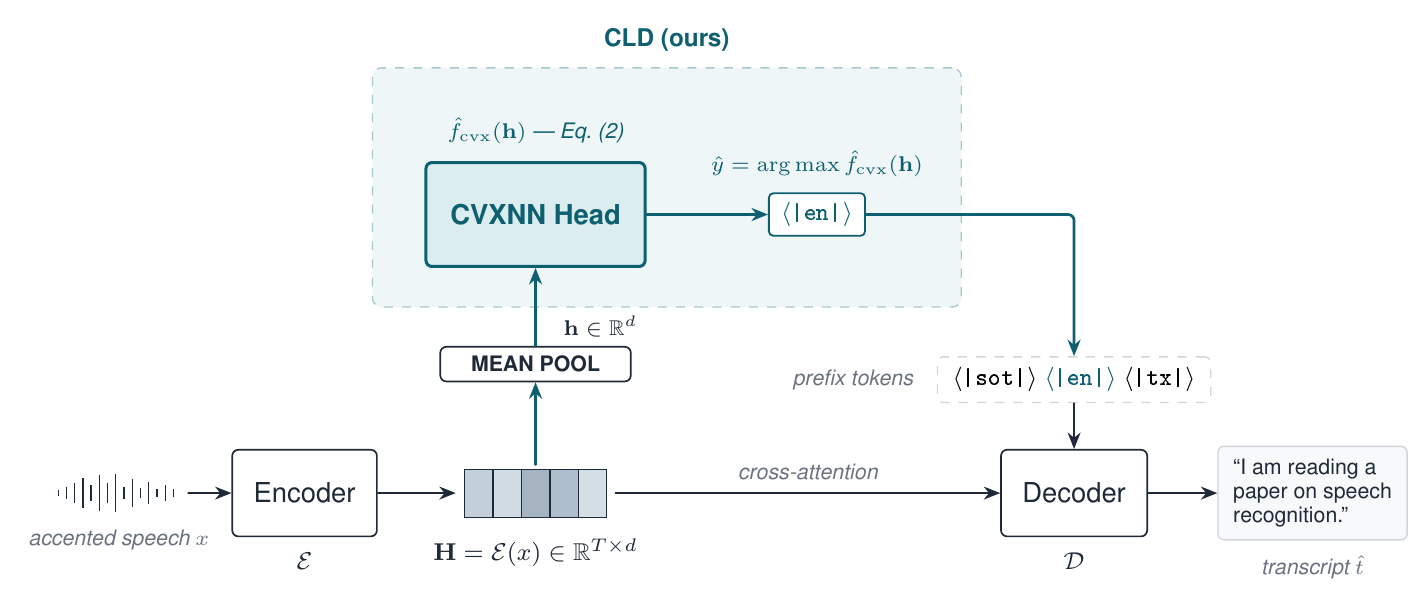}
    \caption{CLD Inference (Online)}
    \label{fig:online-inference}
\end{figure*}
\label{sec:method}
The \textbf{C}onvex \textbf{L}anguage \textbf{D}etection (\textbf{CLD}) method is formally presented: Section \ref{sec:cvxnn} provides preliminary background on the convex reformulated program of two-layer ReLU networks, and Section \ref{sec:algo} presents its integration with ASR model architecture to yield the CLD framework.
\subsection{Convex Two-Layer ReLU Networks}
\label{sec:cvxnn}

\paragraph{Background.} We observe the standard two-layer ReLU network as $f(x) = \sum_{j=1}^{m} (\Theta_{1j} x)_{+}\, \theta_{2j}$, where $j=1,…,m$ indexes the $m$ hidden units. Here $x \in \R^d$ denotes the input, $\Theta_{1} \in \R^{m \times d}$ and $\theta_2 \in \R^m$ are the layer weights, and $(\cdot)_{+} = \max\{\cdot,0\}$ is the ReLU activation. Given training labels $y \in \R^n$, the model's standard non-convex training objective can be seen as
\begin{equation}
\label{eq:mlp_ncvx_obj}
    \min_{\Theta_1,\theta_2}\;
    \ell\!\left(f_{\Theta_1,\theta_2}(X),\, y\right)
    + \frac{\beta}{2}\sum_{j=1}^{m} 
    \left( \|\Theta_{1j}\|_2^2 + (\theta_{2j})^2 \right),
\end{equation}
with loss function $\ell:\R^n\!\to\!\R$, data matrix $X \in \R^{n\times d}$, and regularization $\beta\ge0$. Equation \ref{eq:mlp_ncvx_obj} presents a non-convex optimization problem, and its minimization is sensitive to hyperparameter tuning (i.e. learning-rate selection). These issues become amplified in large-scale speech applications, where models are more expensive to train than their text-input counterparts. High-dimensional audio data also practically prohibits comprehensive grid-search of all hyperparameters \citep{sainath2013optimization}. Our goal is to retain the expressiveness of \eqref{eq:mlp_ncvx_obj} while employing the stability and reliability of convex methods.

\paragraph{Convex Reformulation.}
\citet{pilanci2020neural} show that \eqref{eq:mlp_ncvx_obj} admits an equivalent convex neural network (cvxNN) representation when the hidden width satisfies $m \ge m^*$ for some $ m^* \le n+1$. 
The reformulation relies on characterizing all possible ReLU activation patterns induced by $X$. Each pattern corresponds to a diagonal matrix selecting rows of $X$, and the full activation pattern set is $\mathcal D_X = \{ D = \mathrm{diag}(\mathbf{1}(Xv \ge 0)) : v \in \R^d \}$. 
The cardinality satisfies $|\mathcal D_X| = \mathcal{O}(r(n/r)^r)$ with $r = \operatorname{rank}(X)$ \citep{pilanci2020neural}. Given $D_i \in \mathcal D_X$, the set of vectors $v$ for which $(Xv)_+ = D_iXv$, is given by the convex cone:
\[
\mathcal K_i = \{v\in \R^d: (2D_i-I)Xv\geq 0\}.
\]
Exact equivalence between \eqref{eq:mlp_ncvx_obj} and its convex reformulation requires enumerating all patterns in $\mathcal D_X$. In practice, we sample $P$ patterns from $\mathcal D_X$ and solve the convex program: 


\begin{equation}
\label{eq:mlp_cvx_obj}
\begin{aligned}
\min_{(v_i,w_i)_{i=1}^P}\;\; &
\ell\!\left( \sum_{i=1}^{P} D_i X (v_i - w_i),\, y \right) \\
& + \beta \sum_{i=1}^{P} (\|v_i\|_2 + \|w_i\|_2) \\[4pt]
\text{s.t.}\;\; & v_i, w_i \in \mathcal K_i,\qquad \forall i \in [P],
\end{aligned}
\end{equation}

When all patterns are used, \eqref{eq:mlp_cvx_obj} retains the same optimal solution as the non-convex formulation of \eqref{eq:mlp_ncvx_obj} under mild conditions \citep{mishkin2022fast}. With sampled patterns, the solutions may marginally differ. The results of \citet{kim2024convex} show that this discrepancy is negligible in practice. CRONOS \citep{feng2024cronos} further demonstrated the robustness of convex reformulation strategies on LLM classification tasks (against hyperparameter tuning). 
Therefore we can confidently leverage this convex surrogate in our study to both preserve the expressive capacity of our models, and enable stable optimization by eliminating dependence on brittle hyperparameters.

\subsection{Convex Language Detection Algorithm}
\label{sec:algo}

\paragraph{Training for Scale.} To tractably work with audio input data our three main algorithmic desiderata are -- \textbf{scale, speed, and robustness}. Scale is particularly crucial since we require high-dimensional and sensitive multi-class speech data. To address this, we first extract \emph{hidden representations} from the ASR encoder, then solve the resulting problem using a multi-GPU, batched implementation of CRONOS in JAX for enhanced parallelization and load balancing. This yields our CLD framework: a lightweight detection head that operates directly on encoder features to identify the input language before decoding. Formally, given an input waveform $x$ sampled from dataset ${(x_i, y_i)}_{i=1}^N$, the encoder produces a representation $h$ from which CLD predicts a language token $\hat{y}$. The decoder then generates the transcript $\hat{t}$ conditioned on this token. Algorithm~\ref{alg:cld_train} outlines the offline CLD training procedure.

\begin{algorithm} 
    \SetKwInput{KwInput}{Input}
    \SetKwInput{KwOutput}{Output}
    \DontPrintSemicolon
    \caption{CLD Training (Offline)}
    \label{alg:cld_train}

    \KwInput{Whisper Encoder $\mathcal{E}$, Dataset $\mathcal{D}_{\text{train}}$, parameters $\rho, \beta$}
    \KwOutput{Trained Convex Detection Head $\hat{f}_{\text{cvx}}$}

    \For{$i \leftarrow 1$ \KwTo $N$}{
        $h_i \leftarrow \mathcal{E}(x_i)$ 
    }
    
    Initialize ADMM variables $(\mathbf{v}, \mathbf{w}, \mathbf{u})$.
    \\
    \Repeat{convergence}{

        $(\mathbf{v}, \mathbf{w}) \leftarrow \arg\min_{\mathbf{v},\mathbf{w}} \Big[ \ell\Big(\sum_{p=1}^{P} D_p X (\mathbf{v}_p - \mathbf{w}_p), y\Big)
        + \beta \sum_{p=1}^{P} (\|\mathbf{v}_p\|_2 + \|\mathbf{w}_p\|_2) + \frac{\rho}{2}\|v\|_2^2+\|w\|_2^2 \Big]$\;
        \begin{flushright}

    \end{flushright}

        $\mathbf{u} \leftarrow \mathbf{u} + \rho \cdot (\mathbf{v} - \mathbf{w})$\;
    }
    \Return{Store trained weights as $\hat{f}_{\text{cvx}}$}
\end{algorithm}

\paragraph{Low Latency Inference.} Algorithm \ref{alg:cld_infer} and Figure \ref{fig:online-inference} summarizes the CLD online inference pipeline. Fast online inference is achieved by augmenting the encoder--decoder pipeline (such as Whisper) with the trained convex module. Given an input audio waveform $x$, the encoder $\mathcal{E}$ first produces a series of hidden representations $H$, where we apply masked mean pooling to obtain a fixed-dimensional utterance representation $h$\footnote{This utterance-level design head keeps the module lightweight, easy to integrate into existing ASR pipelines, and performant.}, which is passed through the trained convex head $\hat{f}_{\text{cvx}}$ to predict the language token $\hat{y}$. This prediction is computed in a single lightweight forward pass, ensuring sub-500ms latency. The predicted token $\hat{y}$ is then supplied to the Whisper decoder $\mathcal{D}$ as the initialization token, enabling the decoder to accurately generate the final transcription $\hat{t}$ conditioned on the detected language. Thus, CLD integrates seamlessly into the ASR pipeline and improves transcription robustness while incurring negligible latency at inference.

\begin{algorithm}[]
    \SetKwInput{KwInput}{Input}
    \SetKwInput{KwOutput}{Output}
    \DontPrintSemicolon
    \caption{CLD Inference (Online)}
    \label{alg:cld_infer}

    \KwInput{Whisper Encoder $\mathcal{E}$, Decoder $\mathcal{D}$, Trained Head $\hat{f}_{\text{cvx}}$, Input Audio $x$}
    \KwOutput{Predicted Language $\hat{y}$, Transcription $\hat{t}$}

    $H \leftarrow \mathcal{E}(x)$ 

    $h \leftarrow \text{Pool}(H)$
    
    $\hat{y} \leftarrow \arg\max \hat{f}_{\text{cvx}}(h)$ 
    
    $\hat{t} \leftarrow \mathcal{D}(x \,;\, \text{init\_token}=\hat{y})$ 
    
    \Return{$(\hat{y}, \hat{t})$}
\end{algorithm}

%% file: theory_new2.tex
\section{Theoretical Analysis}
\label{sec:theory}

This section presents the CLD detection head trained via the convex program in Eq.~\ref{eq:mlp_cvx_obj} as Lipschitz-stable in the encoder feature space, and induces a computable robustness certificate. This validates that bounded perturbations of the encoder output $E(x)$ cause at most linear degradation of the one-vs-rest margin. Therefore any example with sufficiently large initial margin enjoys a certified radius of label invariance. Appendix~\ref{app:margin} provides complete theoretical derivation.

\subsection{Margin Stability in Hidden Features}
\label{sec:stability}

The CLD head is formally defined as a multi-class classifier on encoder features, and we quantify how perturbations in those features affect its predictions. Let $E:\mathbb{R}^{T}\!\to\mathbb{R}^{d}$ denote the ASR encoder, and $h = E(x)\in\mathbb{R}^{d}$ be the hidden features. The CLD detection module $f:\mathbb{R}^{d}\!\to\mathbb{R}^{K}$ is trained by the convex program in Eq.~\ref{eq:mlp_cvx_obj}. By the cvxNN construction (Section~\ref{sec:cvxnn}), the optimal detection head admits a finite two‑layer ReLU representation with the same objective value as the convex program up to negligible approximation from activation‑pattern sampling. We now introduce the one‑vs‑rest classification margin and the variation norm, and use them to derive Lipschitz and margin‑stability guarantees for $f$.

\begin{definition}[One-vs-Rest classification margin]
For $y\in\{1,\dots,K\}$ and logits $f(h)=(f_1(h),\dots,f_K(h))$, define the classification margin as
\[
\mathrm{mar}(h,y)\;:=\; f_y(h)\;-\;\max_{k\neq y} f_k(h).
\]
\end{definition}

\begin{definition}[Variation norm]\label{def:var}
The variation norm of $f$ is defined as
\[
\|f\|_{\mathrm{var}}
\;:=\;
\inf\Big\{
\sum_{j=1}^{m} \|a_j\|_2\,\|u_j\|_2
\;\;:\;\; f(h)=\sum_{j=1}^{m} a_j [u_j^\top h]_+\Big\}.
\]
\end{definition}

\begin{lemma}[Logit Lipschitzness]\label{lem:lipschitz}
If $f$ admits a representation of \eqref{eq:mlp_cvx_obj}, then for any $h,h'\in\mathbb{R}^{d}$,
\[
\|f(h)-f(h')\|_{\infty}\;\le\;\|f\|_{\mathrm{var}}\;\|h-h'\|_{2}.
\]
\end{lemma}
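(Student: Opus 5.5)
The argument works one representation at a time and then takes the infimum in Definition~\ref{def:var}. Fix any finite representation $f(h)=\sum_{j=1}^m a_j[u_j^\top h]_+$ with $a_j\in\mathbb{R}^K$ and $u_j\in\mathbb{R}^d$. Such a representation exists because $f$ comes from \eqref{eq:mlp_cvx_obj}: each block $D_iX(v_i-w_i)$ is realized on the data by neurons $[v_i^\top h]_+$ and $-[w_i^\top h]_+$, with one output column per class. If the set of representations were empty, the infimum would be $+\infty$ and the claim would be vacuous.

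\textbf{Per-representation bound.} For each coordinate $k$ we have $f_k(h)-f_k(h')=\sum_j a_{jk}\big([u_j^\top h]_+-[u_j^\top h']_+\big)$. The ReLU is $1$-Lipschitz on $\mathbb{R}$, so
\[
\big|[u_j^\top h]_+-[u_j^\top h']_+\big|\le |u_j^\top(h-h')|\le \|u_j\|_2\|h-h'\|_2
\]
by Cauchy--Schwarz. We also have $|a_{jk}|\le\|a_j\|_\infty\le\|a_j\|_2$. Applying the triangle inequality over $j$ and taking the maximum over $k$ gives
\[
\|f(h)-f(h')\|_\infty\le\Big(\sum_j\|a_j\|_2\|u_j\|_2\Big)\|h-h'\|_2 .
\]

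\textbf{Infimum.} The left-hand side depends only on the function $f$, not on the chosen representation. We can therefore take the infimum of the right-hand side over all admissible representations, which yields $\|f\|_{\mathrm{var}}\|h-h'\|_2$. If the infimum is not attained, pick representations whose cost is at most $\|f\|_{\mathrm{var}}+\varepsilon$ and let $\varepsilon\to0$.

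\textbf{Main obstacle.} The only delicate point is the interpretation of the statement, not the analysis. The identity in Definition~\ref{def:var} must be read as equality of functions on all of $\mathbb{R}^d$, not merely on the training rows of $X$. The convex program only fixes the patterns $D_i$ on the data, so I would state explicitly that the head is deployed as the ReLU network $h\mapsto\sum_i\big([v_i^\top h]_+-[w_i^\top h]_+\big)$. This network is then a valid global representation, and it gives the concrete certificate $\|f\|_{\mathrm{var}}\le\sum_i(\|v_i\|_2+\|w_i\|_2)$ for scalar outputs, or the analogous column-wise sum in the multi-class case. The lemma therefore holds for the deployed head with a computable bound.
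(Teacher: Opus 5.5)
Your proof is correct and is essentially the paper's own argument: fix a representation, use that the ReLU is $1$-Lipschitz together with Cauchy--Schwarz and $|a_{j,k}|\le\|a_j\|_2$, maximize over $k$, and take the infimum over representations. Your remark that the representation must hold on all of $\mathbb{R}^d$ is a legitimate clarification that the paper leaves implicit. In other words, the head must be deployed as the ReLU network $h\mapsto\sum_i\bigl([v_i^\top h]_+-[w_i^\top h]_+\bigr)$, not only as the data-restricted model $\sum_i D_iX(v_i-w_i)$.
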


\begin{theorem}[Margin stability under hidden‑feature perturbations]\label{thm:margin-stability}
Let $f$ be the detection head given by Eq.~\ref{eq:mlp_cvx_obj}. For any $y\in\{1,\dots,K\}$ and any $\delta\in\mathbb{R}^{d}$,
\begin{equation}\label{eq:margin-bound}
\mathrm{mar}(h+\delta,y)\;\ge\;
\mathrm{mar}(h,y)\;-\;2\,\|f\|_{\mathrm{var}}\;\|\delta\|_{2}.
\end{equation}
Consequently, if $\|\delta\|_{2}<\mathrm{mar}(h,y)/(2\|f\|_{\mathrm{var}})$, the predicted class is unchanged.
\end{theorem}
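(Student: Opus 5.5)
The plan is to deduce the theorem directly from Lemma~\ref{lem:lipschitz}, using the fact that the margin combines two logits. Set $L:=\|f\|_{\mathrm{var}}$ and $\varepsilon := L\|\delta\|_2$. Lemma~\ref{lem:lipschitz} applied with $h' = h+\delta$ gives, coordinatewise,
\[
|f_k(h+\delta)-f_k(h)| \le \varepsilon \qquad \text{for every } k\in\{1,\dots,K\}.
\]

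The argument then has two steps.
\begin{enumerate}
\item Bound the true-class logit from below: $f_y(h+\delta)\ge f_y(h)-\varepsilon$.
\item Bound the competing maximum from above. For each $k\neq y$ we have $f_k(h+\delta)\le f_k(h)+\varepsilon\le \max_{k'\neq y}f_{k'}(h)+\varepsilon$. Taking the maximum over $k\neq y$ gives $\max_{k\neq y}f_k(h+\delta)\le \max_{k\neq y}f_k(h)+\varepsilon$. This is the standard fact that the pointwise maximum of finitely many functions is $1$-Lipschitz with respect to the sup-norm of their differences.
\end{enumerate}
Subtracting the second bound from the first yields $\mathrm{mar}(h+\delta,y)\ge \mathrm{mar}(h,y)-2\varepsilon$, which is \eqref{eq:margin-bound}.

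For the consequence, suppose $\|\delta\|_2<\mathrm{mar}(h,y)/(2L)$. This presupposes $\mathrm{mar}(h,y)>0$, i.e.\ $y$ is the prediction at $h$. The bound then gives $\mathrm{mar}(h+\delta,y)>0$, so $f_y(h+\delta)>f_k(h+\delta)$ for all $k\neq y$, and the $\arg\max$ stays equal to $y$. If $L=0$, then $f$ is constant, the condition should be read as vacuous, and the claim is trivial.

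The only real content sits in Lemma~\ref{lem:lipschitz}, which I take as given. If it had to be rechecked, I would fix any representation $f(h)=\sum_j a_j[u_j^\top h]_+$ and use the $1$-Lipschitzness of the ReLU. This gives
\[
|f_k(h)-f_k(h')|\le\sum_j |a_{jk}|\,|u_j^\top(h-h')|\le \sum_j\|a_j\|_2\|u_j\|_2\,\|h-h'\|_2 ,
\]
where $|a_{jk}|\le\|a_j\|_2$ and Cauchy--Schwarz are used in the last step. Taking the infimum over representations completes the lemma. The only subtle point is that the multi-output convex solution of \eqref{eq:mlp_cvx_obj} indeed yields such a finite representation, and Section~\ref{sec:cvxnn} supplies this. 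I therefore expect no genuine obstacle. The one step needing care is the max-over-$k\neq y$ argument in step~2, which produces the factor $2$ rather than $1$.
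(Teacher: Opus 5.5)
Your proof is correct and follows the paper's argument: both apply Lemma~\ref{lem:lipschitz} to bound the drop in the true-class logit $f_y$ and the rise of $\max_{k\neq y} f_k$ by $\|f\|_{\mathrm{var}}\|\delta\|_2$ each, which gives the factor $2$. Your step 2 spells out the bound on the competitor maximum, which the paper compresses into ``each difference is $\le \|f\|_{\mathrm{var}}\|\delta\|_2$'' with $k^\star$ taken as the argmax at $h$; you also handle the two sign directions correctly, where the paper's wording glosses over them.
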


Furthermore, if the encoder $E$ is $L_E$‑Lipschitz, i.e., $\|E(x)-E(x')\|_2\le L_E\|x-x'\|_2$, then
\[
\mathrm{mar}(E(x+\eta),y)\;\ge\;\mathrm{mar}(E(x),y)-2\,\|f\|_{\mathrm{var}}\,L_E\,\|\eta\|_2,
\]
and the predicted class is preserved whenever $\|\eta\|_2<\mathrm{mar}(E(x),y)/(2\|f\|_{\mathrm{var}}L_E)$. Proof of Theorem \ref{thm:margin-stability} is presented in Appendix \ref{proof:theorem1}.

\subsection{Robustness Certificates from the Convex Program}
\label{sec:certificates}

The previous subsection shows that the variation norm $\|f\|_{\mathrm{var}}$ controls both the logit Lipschitz constant and the stability of the classification margin under hidden‑feature perturbations. We now relate this quantity to the solution of Eq.~\ref{eq:mlp_cvx_obj}. We specifically show that any feasible convex solution $(v_p,w_p)_{p=1}^P$ yields an explicit upper bound on $\|f\|_{\mathrm{var}}$, and that this bound can be expressed in terms of block norms (or Frobenius norms) of the optimization variables. This gives a practical, data‑dependent robustness certificate: after training CLD, we can read off a certified Lipschitz constant and margin radius directly from the learned weights.

\begin{proposition}[Variation‑norm certificate from the convex penalty]\label{prop:var-certificate}
Let $\{(v_p,w_p)\}_{p=1}^{P}$ denote the variables of Eq.~\ref{eq:mlp_cvx_obj}. Then
\[
\|f\|_{\mathrm{var}}
\;\le\;
\mathcal{B}_{\mathrm{cvx}}
\quad\text{with}\quad
\mathcal{B}_{\mathrm{cvx}}:=\sum_{p=1}^{P}\!\big(\|v_p\|_{2}+\|w_p\|_{2}\big),
\]
interpreting $\|\cdot\|_2$ blockwise (e.g., columnwise $\ell_2$ with a sum across classes). Consequently,
\[
\mathrm{mar}(h+\delta,y)\;\ge\;\mathrm{mar}(h,y)-2\,\mathcal{B}_{\mathrm{cvx}}\,\|\delta\|_2.
\]
If the non-convex two‑layer form with penalty $\frac{\beta}{2}\sum_j(\|u_j\|_2^2+\|a_j\|_2^2)$ is used instead, then by AM–GM (Appendix~\ref{app:amgm})
$\|f\|_{\mathrm{var}}\le \frac{1}{2}\sum_j(\|u_j\|_2^2+\|a_j\|_2^2)$, so larger $\beta$ tightens the certified radius.
\end{proposition}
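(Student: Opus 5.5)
The plan is to exhibit an explicit two-layer ReLU representation of $f$ built from the convex variables. The infimum in Definition~\ref{def:var} is then at most the cost of that representation, which I will bound by $\mathcal{B}_{\mathrm{cvx}}$.

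\textbf{Step 1: translate the convex predictor into a ReLU network.} Given feasible $(v_p,w_p)$, the convex predictor on the data is $\sum_p D_pX(v_p-w_p)$. The cone constraint $v_p\in\mathcal K_p$ means $(2D_p-I)Xv_p\ge 0$, so $D_pXv_p=(Xv_p)_+$, and likewise for $w_p$. The head is therefore naturally realised as
\[
f(h)=\sum_{p=1}^P\big([v_p^\top h]_+-[w_p^\top h]_+\big),
\]
which is the standard decoding used in Section~\ref{sec:cvxnn}. In the multi-class case, $v_p$ is a $d\times K$ block. I interpret the blockwise norm columnwise: for each class $k$ and pattern $p$ there is one neuron with $u=v_{p,k}$ and $a=e_k$, and one with $u=w_{p,k}$ and $a=-e_k$. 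The identity ``$f$ equals this network'' should be stated as the definition of the deployed head off the training set. It holds exactly on training data by the cone constraints.

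\textbf{Step 2: bound the cost of the representation.} Each neuron has $\|a_j\|_2=1$, so its cost $\|a_j\|_2\|u_j\|_2$ equals $\|v_{p,k}\|_2$ or $\|w_{p,k}\|_2$. Summing over $j$ gives $\sum_{p,k}(\|v_{p,k}\|_2+\|w_{p,k}\|_2)$, which is exactly $\mathcal{B}_{\mathrm{cvx}}$ under the stated blockwise convention. If one instead uses the Frobenius norm $\|v_p\|_F$ of each block, the inequality still holds provided each block is treated as a single neuron.

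That single-neuron treatment is the main obstacle. A matrix block $v_p$ does not give a single rank-one ReLU unit $a[u^\top h]_+$. This is why I adopt the columnwise-per-class convention, which makes every neuron rank one. Taking the infimum then gives $\|f\|_{\mathrm{var}}\le\mathcal{B}_{\mathrm{cvx}}$.

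\textbf{Step 3: deduce the margin bound.} Substituting $\|f\|_{\mathrm{var}}\le\mathcal{B}_{\mathrm{cvx}}$ into \eqref{eq:margin-bound} of Theorem~\ref{thm:margin-stability}, whose right-hand side is decreasing in $\|f\|_{\mathrm{var}}$, gives
\[
\mathrm{mar}(h+\delta,y)\ge\mathrm{mar}(h,y)-2\mathcal{B}_{\mathrm{cvx}}\|\delta\|_2 .
\]

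\textbf{Step 4: handle the non-convex penalty.} For any representation $f=\sum_j a_j[u_j^\top h]_+$, AM--GM gives $\|a_j\|_2\|u_j\|_2\le\tfrac12(\|a_j\|_2^2+\|u_j\|_2^2)$. Summing over $j$ and taking the infimum yields $\|f\|_{\mathrm{var}}\le\frac12\sum_j(\|u_j\|_2^2+\|a_j\|_2^2)$.

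For the remark on $\beta$: a larger $\beta$ penalises this quantity more heavily at the optimum, which shrinks the bound and so enlarges the certified radius $\mathrm{mar}/(2\|f\|_{\mathrm{var}})$. This is a qualitative monotonicity observation about the trained solution, not a pointwise inequality.
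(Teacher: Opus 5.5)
Your proposal is correct and is essentially the paper's proof (given for Theorem~\ref{thm:var-certificate} in Appendix~\ref{app:var-norm}): the columns $v_{p,k}$, $w_{p,k}$ serve as hidden units with output weights $\pm e_k$, each atom costs $\|u\|_2$ since $\|e_k\|_2=1$, summing gives the columnwise $\ell_{2,1}$ bound, and the margin and nonconvex claims then follow from Theorem~\ref{thm:margin-stability} and AM--GM exactly as in Appendix~\ref{app:amgm}. Drop your conditional aside on Frobenius blocks, because that case genuinely needs the extra factor from $\sum_k\|M_{:,k}\|_2\le\sqrt{K}\|M\|_F$, as the paper records in Theorem~\ref{thm:var-certificate}.
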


In addition, if the logits share blocks (group‑sparse outputs), one obtains
\[
\|f(h)-f(h')\|_\infty \le \sum_{g} w_g \|A_g\|_2 \|U_g\|_2\,\|h-h'\|_2,
\]
and the margin bound with $\|f\|_{\mathrm{var}}$ can be replaced by the weighted group sum.

\paragraph{Feature-space certificate.}
The certificate above should be interpreted primarily as a hidden-feature certificate.
For an encoded utterance $h = E(x)$, define the certified feature-space radius
\begin{equation}
    r_h(h, y) = \frac{\operatorname{mar}(h, y)}{2 B_{\mathrm{cvx}}},
\end{equation}
where $B_{\mathrm{cvx}}$ is the convex penalty-derived upper bound on $\|f\|_{\mathrm{var}}$.
Any perturbation $\delta$ satisfying $\|\delta\|_2 < r_h(h, y)$ leaves the predicted language unchanged.
If a reliable encoder Lipschitz bound $L_E$ is available, this implies the conservative audio-space radius
$r_x = r_h / L_E$.
However, for deep Transformer encoders, global $L_E$ bounds can be highly pessimistic,
so we report feature-space certificates as the primary stability measure and treat
end-to-end audio certificates as conservative diagnostics rather than tight robustness claims.
\begin{proposition}[Training‑set representation]\label{prop:repr}
Let $\{(v_i,w_i)\}$ be any feasible point of \eqref{eq:mlp_cvx_obj}. Then the training predictions equal those of a (vector‑valued) two‑layer ReLU network with at most $2PK$ hidden units:
\[
f(H)\;=\;\sum_{i=1}^P\sum_{k=1}^K
\Big( e_k\,[H v_{i,k}]_+ \;-\; e_k\,[H w_{i,k}]_+ \Big),
\]
where $e_k$ are the standard basis vectors in $\mathbb{R}^K$. Equivalently, this network has hidden
weights $\{u_{i,k}^{+},u_{i,k}^{-}\}=\{v_{i,k},w_{i,k}\}$ and output weights
$\{a_{i,k}^{+},a_{i,k}^{-}\}=\{e_k,-e_k\}$.
\end{proposition}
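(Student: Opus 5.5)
The proof is a direct algebraic identity on the training matrix $H$, the stacked encoder features playing the role of $X$ in \eqref{eq:mlp_cvx_obj}. The only real ingredient is the cone constraint $v_i,w_i\in\mathcal K_i$. In the multi-class setting we read each block $v_i\in\mathbb{R}^{d\times K}$ columnwise, $v_i=[v_{i,1},\dots,v_{i,K}]$, and similarly for $w_i$. The constraint $v_i\in\mathcal K_i$ is read columnwise as $(2D_i-I)Hv_{i,k}\ge 0$ for every $k$, matching the blockwise convention already used in Proposition~\ref{prop:var-certificate}. The $k$-th logit column of the convex model on the training set is then $\sum_{i=1}^P D_iH(v_{i,k}-w_{i,k})$, and the full prediction matrix is $\sum_{i,k}D_iH(v_{i,k}-w_{i,k})e_k^\top$.

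The key step is the pointwise identity $D_iHv=[Hv]_+$ for every $v$ with $(2D_i-I)Hv\ge0$. To check it, fix a row index $n$ and split into cases according to the diagonal entry of $D_i$.
\begin{itemize}
\item If $(D_i)_{nn}=1$, the constraint gives $(Hv)_n\ge0$. Hence $(D_iHv)_n=(Hv)_n=[(Hv)_n]_+$.
\item If $(D_i)_{nn}=0$, the constraint gives $-(Hv)_n\ge0$. Hence $(D_iHv)_n=0=[(Hv)_n]_+$.
\end{itemize}
This is exactly the defining property of $\mathcal K_i$ stated in Section~\ref{sec:cvxnn}. We apply it to each column $v_{i,k}$ and $w_{i,k}$ separately, which is legitimate because the constraint is imposed columnwise. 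This gives $D_iHv_{i,k}=[Hv_{i,k}]_+$ and $D_iHw_{i,k}=[Hw_{i,k}]_+$.

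Substituting into the logit expression and using linearity of the sums yields
\[
f(H)=\sum_{i=1}^P\sum_{k=1}^K\big(e_k[Hv_{i,k}]_+-e_k[Hw_{i,k}]_+\big),
\]
interpreted rowwise, so each training row gets the logit vector in $\mathbb{R}^K$. This is precisely a two-layer ReLU network of the form in Definition~\ref{def:var}. Its hidden weights are $u\in\{v_{i,k},w_{i,k}\}$ and its output weights are $a\in\{e_k,-e_k\}$. Counting the terms gives at most $2PK$ units, and fewer if zero columns are discarded.

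The main obstacle is interpretive rather than technical. One must fix the multi-class meaning of \eqref{eq:mlp_cvx_obj}, which the paper states only for scalar outputs. In particular one must justify that the cone constraint acts on each output column, so that every column shares the pattern $D_i$. If instead a single $v_i$ per pattern with a separate output matrix were intended, I would first reduce to this columnwise form by absorbing the output weights into the columns. With the columnwise convention fixed, the argument is the single-row case check above. I would also remark that the equality holds only on the training rows $H$, since $D_i$ is tied to $H$, and this is why the proposition is stated for training predictions.
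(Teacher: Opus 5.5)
Your proposal is correct and follows essentially the same route as the paper: the paper derives the representation from the pattern-cone gate identity $[Xv]_+=D\,Xv$ for $v\in\mathcal{K}(D)$ (the appendix subsection on activation patterns and pattern cones), applies it column by column to $v_i,w_i\in\mathbb{R}^{d\times K}$, and reads off hidden weights $v_{i,k},w_{i,k}$ with output weights $\pm e_k$. Your entrywise case check is slightly sharper than the paper's presentation, since it shows the identity holds exactly for every $v$ in the cone, whereas the paper's ``up to measure-zero ties'' caveat concerns only whether the induced pattern $D(X,v)$ equals $D$, which the representation never needs.
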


\begin{theorem}[cvxNN $\Rightarrow$ variation‑norm certificate]\label{thm:var-certificate}
Let $f$ be represented as in Proposition~\ref{prop:repr}. Then
\[
\|f\|_{\mathrm{var}}
\;\le\;
\widehat{\mathcal{B}}_{\mathrm{cvx}}^{(2,1)}
:=\sum_{i=1}^P \big(\|v_i\|_{2,1}+\|w_i\|_{2,1}\big).
\]
If \eqref{eq:mlp_cvx_obj} uses Frobenius penalties instead, then
\[
\|f\|_{\mathrm{var}}
\;\le\;
\sqrt{K}\,\widehat{\mathcal{B}}_{\mathrm{cvx}}^{\mathrm{F}}
\quad\text{with}\quad
\widehat{\mathcal{B}}_{\mathrm{cvx}}^{\mathrm{F}}
:=\sum_{i=1}^P \big(\|v_i\|_{F}+\|w_i\|_{F}\big).
\]
\end{theorem}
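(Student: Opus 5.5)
The plan is to plug the explicit network of Proposition~\ref{prop:repr} into the infimum in Definition~\ref{def:var}. Any admissible representation gives an upper bound, so it suffices to compute the cost of this particular one. Write $v_i\in\mathbb{R}^{d\times K}$ with columns $v_{i,k}$ (and likewise $w_i$), and define the $(2,1)$-norm as the sum of column $\ell_2$ norms, $\|v_i\|_{2,1}=\sum_{k}\|v_{i,k}\|_2$.

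\textbf{Step 1: bound the cost of the representation.} Proposition~\ref{prop:repr} writes $f(h)=\sum_{i,k}\big(e_k[v_{i,k}^\top h]_+ + (-e_k)[w_{i,k}^\top h]_+\big)$. This is of the form $\sum_j a_j[u_j^\top h]_+$ with $2PK$ terms, pairing $(a_j,u_j)$ with $(e_k,v_{i,k})$ and $(-e_k,w_{i,k})$. Since $\|\pm e_k\|_2=1$, its cost is
\[
\sum_{i=1}^P\sum_{k=1}^K\big(\|v_{i,k}\|_2+\|w_{i,k}\|_2\big)=\sum_{i=1}^P\big(\|v_i\|_{2,1}+\|w_i\|_{2,1}\big).
\]
Taking the infimum in Definition~\ref{def:var} gives the first bound $\|f\|_{\mathrm{var}}\le\widehat{\mathcal{B}}^{(2,1)}_{\mathrm{cvx}}$.

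\textbf{Step 2: pass to Frobenius norms.} Apply Cauchy--Schwarz across the $K$ columns:
\[
\|v_i\|_{2,1}=\sum_k\|v_{i,k}\|_2\le\sqrt{K}\Big(\sum_k\|v_{i,k}\|_2^2\Big)^{1/2}=\sqrt{K}\,\|v_i\|_F,
\]
and likewise for $w_i$. Summing over $i$ gives $\widehat{\mathcal{B}}^{(2,1)}_{\mathrm{cvx}}\le\sqrt{K}\,\widehat{\mathcal{B}}^{\mathrm{F}}_{\mathrm{cvx}}$, and the second bound follows. The representation in Step 1 does not depend on which penalty was used in \eqref{eq:mlp_cvx_obj}; the penalty only affects which norm of the learned variables is controlled by training.

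\textbf{Main obstacle.} The difficulty is not the algebra but what Proposition~\ref{prop:repr} actually provides. It matches $f$ with the ReLU network only on the training matrix $H$, since on the training points $D_iHv=[Hv]_+$ by cone feasibility. The variation norm in Definition~\ref{def:var}, however, is defined through an identity of functions on all of $\mathbb{R}^d$.

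I would resolve this by \emph{defining} the deployed head $f$ as the ReLU network of Proposition~\ref{prop:repr}. This is how CLD evaluates new inputs, and it is the convention already adopted in Section~\ref{sec:stability}. With that convention the identity holds for every $h$, and Step 1 is rigorous. I would add a remark that, off the training set, the masked linear form $\sum_i D_iX(v_i-w_i)$ is not a function of $h$ at all, so the ReLU network is the only meaningful extension.
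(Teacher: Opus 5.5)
Your proposal is correct and matches the paper's proof: the paper also reads off the $2PK$ atoms $(e_k,v_{i,k})$ and $(-e_k,w_{i,k})$ from Proposition~\ref{prop:repr}, uses $\|e_k\|_2=1$ to identify the atom cost with $\sum_i(\|v_i\|_{2,1}+\|w_i\|_{2,1})$, and then applies $\sum_k\|M_{:,k}\|_2\le\sqrt{K}\,\|M\|_F$ for the Frobenius case. Your caveat is a fair point the paper leaves implicit: Proposition~\ref{prop:repr} only identifies $f$ with the ReLU network on the training matrix, so the deployed head must be taken to be that network for Definition~\ref{def:var} to apply. Calling it the ``only meaningful extension'' overstates things, though, since a gated extension $h\mapsto\sum_i\mathbf{1}(g_i^\top h\ge 0)\,h^\top(v_i-w_i)$ built from the pattern-generating vectors $g_i$ is also common, and being discontinuous it would not satisfy the bound.
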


Proof of Theorem~\ref{thm:var-certificate} is provided in Appendix~\ref{app:var-norm}.

%% file: 5_exp.tex
\section{Experiments}
\label{sec:exp}
Section \ref{sec:data} provides details on dialect datasets for all experiments. Section \ref{subsec:main_results} presents performance evaluation: Word Error Rate (WER) metrics, Character Error Rate (CER), language detection accuracy, wall-clock training time and computational efficiency. Our baseline encoder-decoder ASR models include: Whisper-Small, Whisper-Large-V3, and MMS-1B \citep{pratap2024scaling}.

We additionally benchmark CLD against the ASR's default language detection as well as a traditional lightweight neural network (NN) model commonly used for language identification in ASR.  The NN uses the same encoder embeddings and output labels as CLD, consisting of a linear projection from the encoder dimension to a 256-dimensional hidden layer, followed by a ReLU activation, dropout for regularization, and a final linear layer mapping to the output classes. For the multiclass classification task, we also additionally benchmark against a Support Vector Machine (SVM), Kernel SVM, and $k$-Nearest Neighbors (KNN) Clustering. Experimental details such as hyperparameter grid search are presented in Appendix~\ref{app:hyper}.

\subsection{Speech Datasets}
\label{sec:data}

We curate a dataset of multilingual voice transcriptions across high-resource languages and their low-resource sub-dialects. As a primary source of transcription data, we used the Common Voice (v23) dataset \citep{ardila2020common}. We supplement this with several additional dialect datasets for regional speech variance. The Singaporean-English dialect is selected, since studies show it incurs particularly high error rates during voice transcription \citep{fong2002singlish}. Through the Info-communications and Media Development Authority \citep{IMDA2025} of Singapore, we were given direct access to the National Speech Corpus (NCS): the first Singapore English corpus. We also use the Lahaja dataset \citep{DBLP:journals/corr/abs-2408-11440}, a benchmark comprising \num{12.5} hours of Hindi from \num{132} speakers across \num{83} Indian districts. We normalize and augment all audio files via: time stretching, volume gain, pitch shift, and augmented background noise with MUSAN \citep{snyder2015musan}. Our experiments span two divisions:

\paragraph{Binary Classification.} English and Mandarin are the two highest-resource languages in existing speech datasets, which still display some of the lowest accuracy in language prediction for accented speech. This is largely due to the high variance of dialects and accents present \citep{weninger2019deep} in these two languages. For example, Whisper-Small achieves \num{100}\% accuracy on Midwestern English, drops to \num{91.8}\% accuracy on Wales-accented English, yet only achieves \num{61.4}\% accuracy on Malaysian accented English (Table \ref{tab:appendix-accent-acc-full}). We select \num{5} regional dialects per language and perform ablation studies on training sample sizes spanning from \num{100} to \num{10000} samples per language. This quantitatively establishes CLD's performance in low-resource environments. Our experiments equally split training samples across all accents.

\paragraph{Multiclass Classification.} For the multiclass classification task, we select a total of \num{5} languages: English, Chinese, Indonesian, Malaysian, Hindi. This selection was made to establish a challenging classification boundary, as these languages share linguistic and geographical proximity. Such regional influences often cause misidentification (e.g. Singaporean English is frequently confused with Malay or Indonesian). To maintain a low-resource experimental setting, we curate a total of \num{16000} training samples spanning these \num{5} languages, thus incorporating \num{24} unique accents. This resulted in approximately \num{3200} samples per language and \num{666} samples per accent. We then conducted a 80-10-10 train, test, validation split.

\subsection{Main Results and Discussion}
\label{subsec:main_results}

\begin{figure}[t]
  \vskip 0.2in
  \begin{center}

    \centerline{\includegraphics[width=\columnwidth]{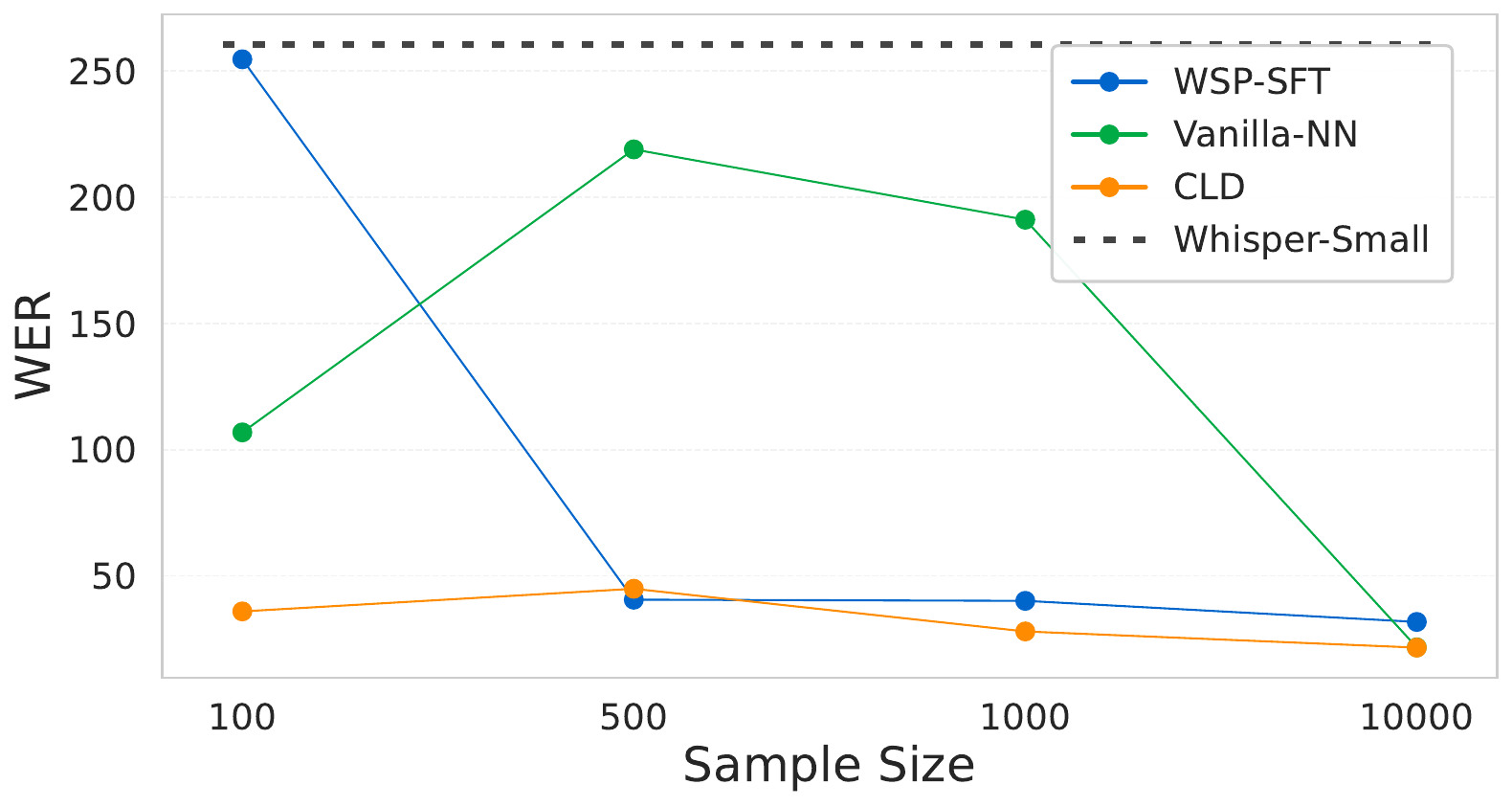}}
    \vskip 0.1in
    \centerline{\includegraphics[width=\columnwidth]{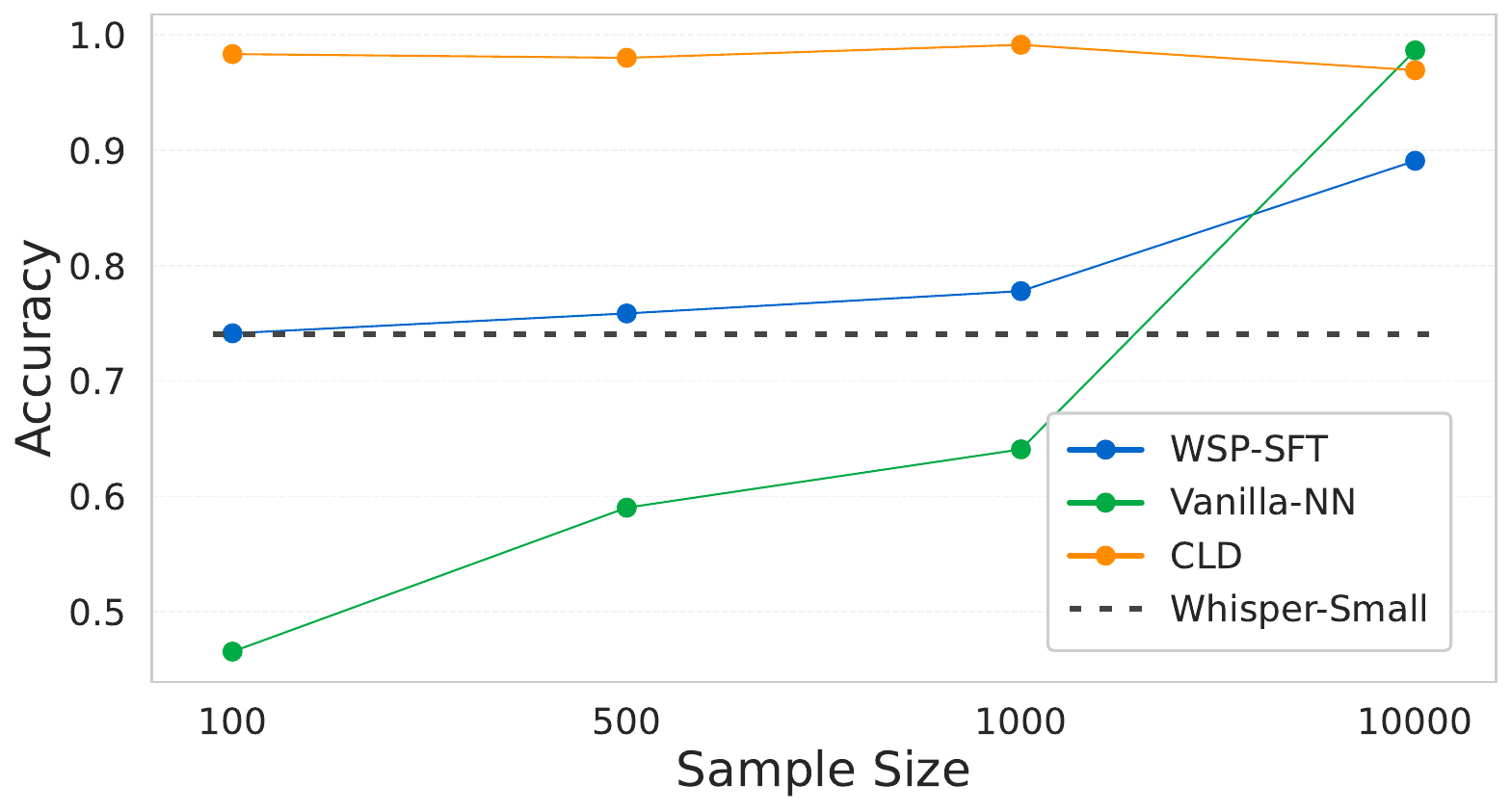}}

    \caption{
      WER (top) — lower is better. Accuracy (bottom) — higher is better.
      CLD shows robustness and competitive performance regardless of sample size,
      while competing methods require more data to achieve comparable performance.
    }
    \label{fig:accu_wer}

  \end{center}
  \vskip -0.4in
\end{figure}

\paragraph{Binary Classification in Low-Resource Regimes.} We validate the performance of CLD augmented Whisper-Small in the low-resource setting. Ablation studies span training sample sizes of: \num{100}, \num{500}, \num{1000}, \num{10000} \footnote{We ensure fair comparison and utilize the same \num{1844} sample size test dataset for evaluation.}. Figure \ref{fig:accu_wer} reports seminal WER and language detection accuracies as sample size scales. Both the vanilla NN and the fine-tuned Whisper-Small (WSP-SFT) exhibit a clear correlation where lower WER corresponds with higher detection accuracy as the sample size increases. This validates the intuition that traditional NN require large volumes of data for optimal performance. 

Conversely, our CLD model demonstrates visibly consistent performance across all sample sizes, achieving a minimum detection accuracy of \num{96.94}\% with \num{10000} samples and a maximum of \num{99.14}\% with \num{1000} samples. This validates its high sample efficiency and strong resilience in low-resource settings. CLD also achieves the lowest WER of \num{21.62} among all models at the \num{10000} sample size, thus establishing our method as a promising solution for low-resource regimes. Appendix \ref{app:plots_metrics} provides CER plots.

\begin{table}[t]
\vskip 0.1in
\centering
\small
\scshape
\begin{tabular}{lcc}
\toprule
Model & Training Time (s) & TFLOPs Cost \\
\midrule
WSP-SFT        & 1{,}096.74 & 239{,}528 \\
Vanilla-NN    &   840.30   & 183{,}521 \\
\rowcolor{yellow!25}
CLD (ours)    & \textbf{64.45} & \textbf{14{,}075} \\
\bottomrule
\end{tabular}
\vskip 0.1in
\caption{Training efficiency of fine-tuned Whisper-Small (WSP-SFT), vanilla-NN, and CLD on the Multiclass dataset. Our method demonstrates substantive compute and training time efficiency.}
\label{tab:training_time}

\vskip -0.1in
\end{table}


\begin{table*}[t]

\centering
\small
\scshape
\begin{tabular}{
l c
ccc>{\columncolor{yellow!20}}c
ccc>{\columncolor{yellow!20}}c
}
\toprule
Language - Dialect &
Size &
\multicolumn{4}{c}{Correctly Predicted Samples} &
\multicolumn{4}{c}{Accuracy} \\
\cmidrule(lr){3-6} \cmidrule(lr){7-10}
&  &
WSP & WSP-SFT & NN & CLD &
WSP & WSP-SFT & NN & CLD \\
\midrule
EN-Hindi & 190 & 176 & 177 & 190 & 186 & 0.9263 & 0.9316 & \textbf{1.0000} & 0.9789 \\
EN-Malaysian & 215 & 136 & 124 & 214 & 214 & 0.6326 & 0.5767 & \textbf{0.9953} & \textbf{0.9953} \\
EN-Singaporean & 205 & 166 & 162 & 200 & 205 & 0.8098 & 0.7902 & 0.9756 & \textbf{1.0000} \\
EN-Pakistani & 189 & 182 & 182 & 189 & 187 & 0.9630 & 0.9630 & \textbf{1.0000} & 0.9894 \\
EN-American & 204 & 195 & 194 & 199 & 203 & 0.9559 & 0.9510 & 0.9755 & \textbf{0.9951} \\
ZH-Min Dong / Fuzhou & 71 & 7 & 15 & 18 & 63 & 0.0986 & 0.2113 & 0.2535 & \textbf{0.8873} \\
ZH-Pu-Xian & 216 & 32 & 56 & 44 & 208 & 0.1481 & 0.2593 & 0.2037 & \textbf{0.9630} \\
ZH-Hong Kong & 184 & 121 & 132 & 0 & 174 & 0.6576 & 0.7174 & 0.0000 & \textbf{0.9457} \\
ZH-Taiwanese & 181 & 176 & 179 & 16 & 181 & 0.9724 & 0.9890 & 0.0884 & \textbf{1.0000} \\
ZH-Mainland & 205 & 187 & 190 & 18 & 202 & 0.9122 & 0.9268 & 0.0878 & \textbf{0.9854} \\
\midrule
Total  & 1860 & 1378 & 1411 & 1088 & 1823 & 0.7077 & 0.7207 & 0.5580 & \textbf{0.9695} \\
\bottomrule
\end{tabular}
\vskip 0.1in
\caption{Multi-dialect classification accuracy between English (EN) and Chinese (ZH) across \num{10} accents for (Whisper-baseline) WSP, WSP-SFT, NN, and CLD at \num{500} samples per language and \num{100} samples per dialect.}
\label{tab:accent-prediction-500}
\vskip -0.1in
\end{table*}

\begin{table*}[t]
\centering
\small
\scshape
\begin{tabular}{l ccc ccc ccc}
\toprule
& \multicolumn{3}{c}{Detection Accuracy}
& \multicolumn{3}{c}{WER}
& \multicolumn{3}{c}{CER} \\
\cmidrule(lr){2-4}
\cmidrule(lr){5-7}
\cmidrule(lr){8-10}
Language Classifier
& WSP & WSP-L & MMS-1B
& WSP & WSP-L & MMS-1B
& WSP & WSP-L & MMS-1B \\
\midrule

Default
& 0.7154 & 0.8033 & 0.6701
& 139.37 & 40.41  & 51.88
& 73.85  & 21.80  & 27.61 \\

KNN
& 0.6123 & 0.7145 & 0.4981
& 145.21 & 44.89  & 57.34
& 81.05  & 29.12  & 32.76 \\

Linear SVM
& 0.9392 & 0.9501 & 0.5653
& 48.74  & 39.36  & 50.73
& 28.28  & 23.68  & 26.07 \\

Kernel SVM
& 0.9431 & 0.9582 & 0.5701
& 46.52  & 37.91  & 49.12
& 26.14  & 22.05  & 25.88 \\

NN
& 0.7737 & 0.9605 & 0.8612
& 53.84  & 29.25  & 48.26
& 34.52  & 15.99  & 23.64 \\

\rowcolor{yellow!25}
CLD (ours)
& \textbf{0.9715} & \textbf{0.9806} & \textbf{0.9702}
& \textbf{31.74}  & \textbf{28.60}  & \textbf{45.27}
& \textbf{17.84}  & \textbf{15.37}  & \textbf{21.58} \\
\bottomrule

\end{tabular}
\vskip 0.1in
\caption{Word Error Rate (WER), Character Error Rate (CER), and Detection Accuracy metrics between Whisper-small (WSP), Whisper-large-v3 (WSP-L), and MMS-1B using KNN, Default, Linear SVM, Kernel SVM, NN, and CLD language classifiers.}
\label{tab:multi}
\vskip -0.1in
\end{table*}

\paragraph{Qualitative error analysis.} The large WER reduction should be interpreted primarily as preventing cross-lingual decoding failures rather than fully solving dialectal transcription. In the default Whisper-Small pipeline, accented English utterances can be assigned an incorrect language token, causing the decoder to produce text in a neighboring or unrelated language. CLD reduces this failure mode by supplying the predicted language token before decoding, allowing the frozen decoder to operate in the correct language space. This explains why WSP WER drops from 139.37 under the default detector to 31.74 with CLD in the multiclass setting, while still leaving room for within-language dialectal transcription errors.

\paragraph{CLD is Fast and Efficient.} Table~\ref{tab:training_time} shows that the CLD model achieves a training time of just \num{64.45} seconds—approximately \num{7.7}\% of the runtime of a standard vanilla NN, while requiring \num{13}x fewer TFLOPs. This efficiency derives from the convex reformulation solved via ADMM (Appendix~\ref{app:admm}) and implemented in JAX, which enables highly parallelizable updates and rapid convergence. Unlike the vanilla NN which requires multiple passes and steps across the dataset for convergence with necessary hyperparameter grid search, our convex program ensures a unique global optimum with elegant solve methods. Together, these properties establish CLD as both fast and efficient, offering a more practical alternative to conventional largely heuristic-driven performance among traditional architectures.

\paragraph{Dialect Variation and Performance.} Table~\ref{tab:accent-prediction-500} summarizes the performance across dialects at the \num{500} sample size. Additional numerical results as presented in Tables \ref{tab:accent-prediction-100} - \ref{tab:accent-prediction-10000} of Appendix \ref{appsec:more_tables}. A traditional NN can achieve \num{100}\% classification accuracy on a subset of English accents, which is expected given the imbalance of pretraining data and the dominance of English representations in the Whisper feature space. However, this also induces a strong classification bias: the model defaults to predicting English, leading to \num{88.88}\% of Chinese samples being misclassified as English. In particular, the NN achieves \num{8.78}\% accuracy on Mainland Chinese and 8.84\% on Taiwanese dialect, highlighting the English-centric behavior of models trained on imbalanced datasets and the resulting degradation on low-resource dialects. 

In contrast, our CLD framework achieves \textbf{uniformly high accuracy} across all accents in both languages. For the highly challenging Min Dong Chinese dialect, where the default WSP and fine-tuned NN models suffer significantly (9.86\% and 25.35\% accuracy, respectively), CLD achieves 88.73\%. Across all other dialects, CLD consistently exceeds 94\% accuracy. These results demonstrate the sample efficiency, robustness and low variance of CLD in real-world low-resource settings, validating its effectiveness for challenging and diverse dialectal speech.

\paragraph{Multiclass Classification in Low-Resource Regimes.} 
Table \ref{tab:multi} reports performance on the scaled-up multiclass experiment evaluating additional ASR models—Whisper-Small, Whisper-Large-V3, and MMS-1B—across \num{5} languages. The standard NN struggles to scale to multiple classes (with the exception of Whisper-Large-V3 at higher training cost), and although the linear SVM and kernel SVM perform well across the Whisper family, they degrade substantially on MMS-1B. We also find that KNN struggles with classification across the board, signaling that simple distance metrics in high-dimensional encoder feature space cannot resolve the challenging dialectal boundaries. 

The CLD method achieves the best performance across all evaluation metrics with strong generalization, reaching up to \num{44.78}\% increase in detection accuracy and a \num{12.74}\% decrease in WER compared to the baseline for MMS-1B. We performed an extensive hyperparameter grid search for all baselines, while CLD is hyperparameter-free due to its convex formulation, highlighting the practical advantage of guaranteed global optimality over heuristic tuning. For fair comparison we note that larger models like Whisper-Large-V3 and MMS-1B in fact have more accurate language detectors as a baseline. This motivates that augmenting these pretrained models with the modular CLD architecture can increase accuracy and decrease WER by up to \num{29.21}\%. These results demonstrate the scaling potential of the CLD model in multiclass settings to improve the performance of larger pretrained models.

Figure~\ref{fig:confusion_matrices1} presents confusion matrices for our CLD network and vanilla NN on Whisper-Small. The vanilla NN performs well on Hindi (hi) and Indonesian (id) with accuracies of \num{93}\% and \num{98}\%, yet its performance drops sharply for other languages, reaching only \num{44}\% on Chinese (zh). It also exhibits systematic confusions: misclassifying Chinese (zh) as Indonesian (id) in \num{34}\% of cases and Malay (ms) as Indonesian (id) in \num{23}\%. In contrast, our novel CLD model shows strong diagonal dominance across all languages, with its \textbf{lowest accuracy still at 95\%} in en, counteracting the English-centric bias. Further discussion on bias among low-resource dialects in high-resource languages continues in Section~\ref{sec:bias}. 

\begin{figure}[t]
  \begin{center}

    \centerline{\includegraphics[width=0.8\columnwidth]{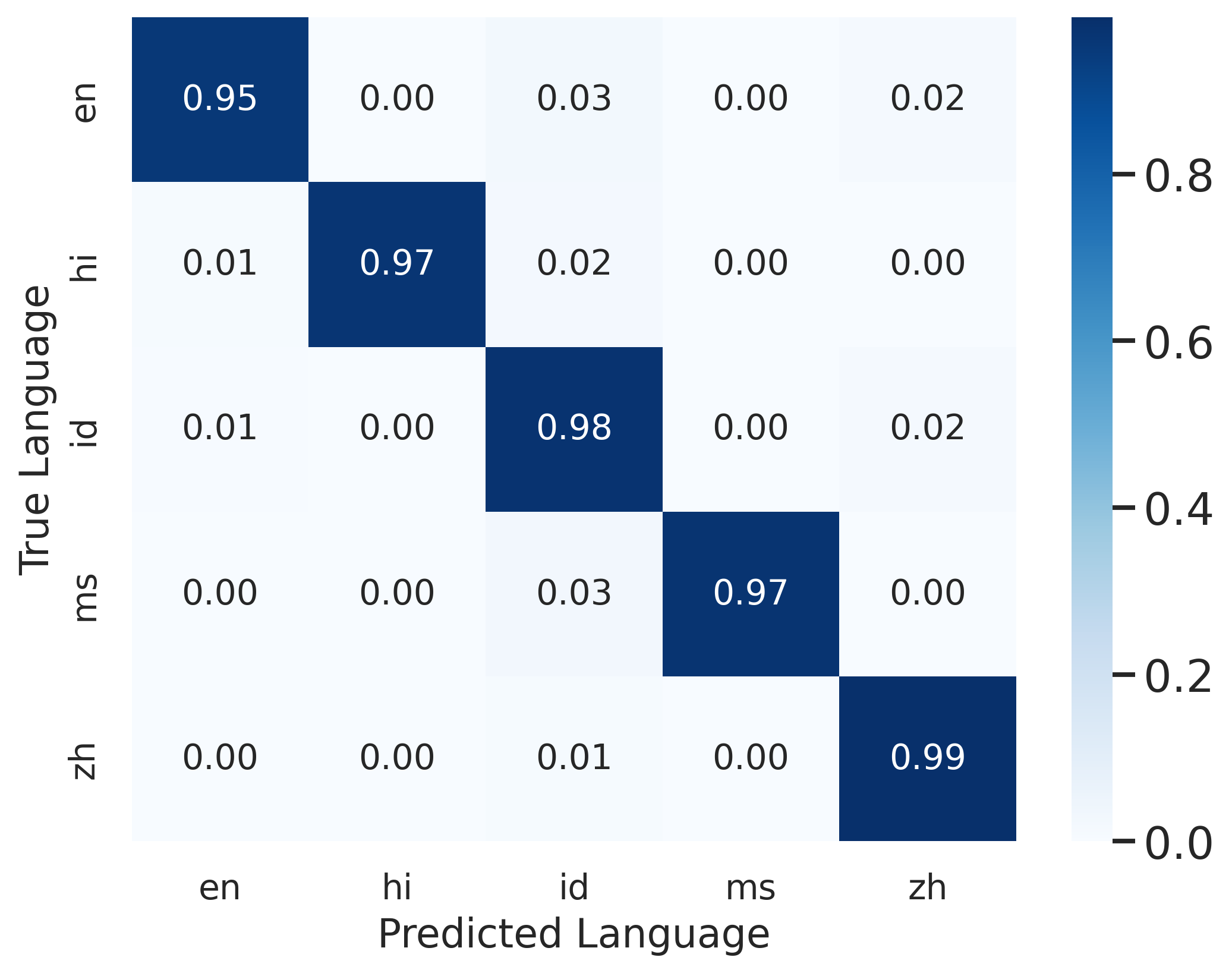}}
    \vskip 0.1in
    \centerline{\includegraphics[width=0.8\columnwidth]{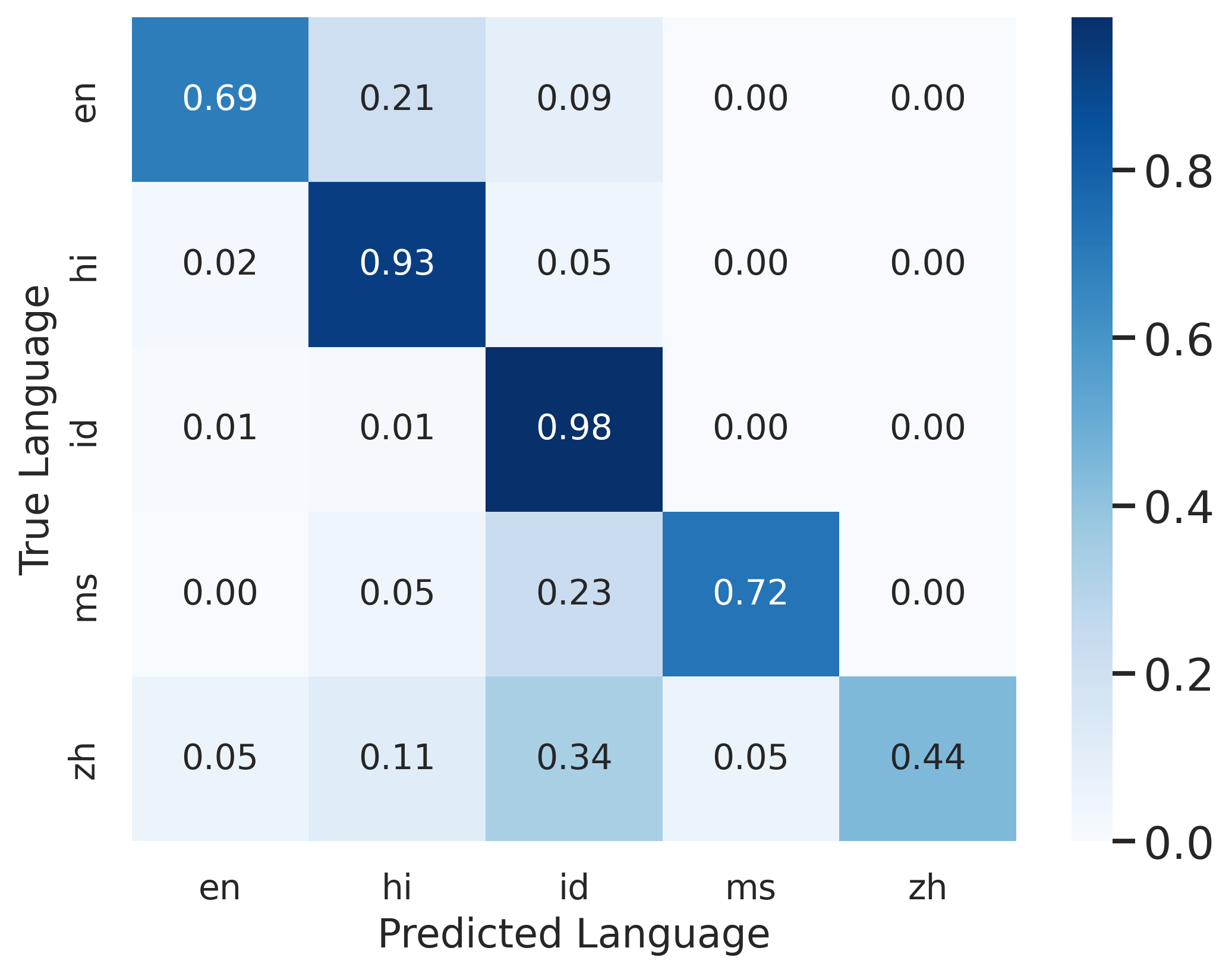}}

    \caption{
      Language classification performance using Whisper-small across
      English (en), Chinese (zh), Indonesian (id), Malaysian (ms), and Hindi (hi).
      CLD (top) is compared against a vanilla neural network (bottom).
    }
    \label{fig:confusion_matrices1}

  \end{center}
  \vskip -0.4in
\end{figure}

\paragraph{Qualitative Human Case Study.} Our human-facing case study illustrates the user-visible impact of language misidentification in realistic dialogue settings. This study is not
intended as a statistically powered population-level evaluation; our primary results are the large-scale benchmarks and analyses. In the case study, participants used a
constrained hospitality scenario to ensure the conversational domain remained
consistent across systems. The results highlight a recurring qualitative failure
mode: default language detection can route accented English or Mandarin utterances into the wrong decoding language, producing cross-lingual transcripts
that are unusable. 

CLD reduced the number of wrong-language
outputs and word errors in this illustrative setting, supporting the quantitative trends reported in the main benchmark. The surveys were conducted with five participants in Singapore speaking English (EN) and ten participants in southeastern China speaking Mandarin (ZH). This experiment underscores the fundamental role of metrics as proxies for qualitative real world studies. Participants were instructed to assume the position of a general guest in a hospitality setting requesting an item to ensure precise and consistent dialogue across all models. One example of Whisper's output is below:

\begin{quote}
\textbf{Concierge:} Hello Mr. Kevin Fong, this is Lucy at the front desk. How may I help you?\\
\textbf{Guest:} Baru keadaan seperti seorang seorang seorang seperti seorang, seorang seorang berada di dalamnya.\\
\end{quote}

Although the Singaporean participant spoke naturally in his native English, the Whisper model detected and transcribed the input incorrectly into Bahasa. Interestingly, Singaporean English also frequently mis-transcribed Mandarin characters (and vice versa), and we discovered an unexpected type of error also arose from traditional NN detection heads: local accents introduced errors such as the user speaking  \textit{‘Both hot and cold settings’} to \textit{‘Both hood and coat setting’}. In contrast, our CLD framework achieved the fastest inference and most accurate results demonstrating robustness to input dialect variance. CLD achieved both minimal word errors and the smallest number of incorrect language detections. Additional numerical tables of results and performance plots are presented in the Appendix (Table \ref{tab:human-feedback-combined}).

%% file: 6_conclusion.tex
\section{Conclusion}

We introduced the Convex Language Detection (CLD) framework as a fast, lightweight, and theoretically grounded method for robust language identification in ASR tasks. Leveraging the convex reformulation yields certified robustness and improved sample efficiency, allowing CLD to maintain consistent, high accuracy regardless of dataset size. CLD effectively mitigated the English-centric failure modes of existing ASR pipelines, demonstrating its effectiveness in addressing low-resource yet diverse dialectal speech conditions. Future work will explore integrating CLD more deeply within the traditional ASR model to enable end-to-end continuous representation learning, and investigate applying this convex-analytic framework within multimodal agentic models.

\paragraph{End-to-end differentiable CLD.} A natural extension in future work is to make the full pipeline differentiable by back-propagating through the convex program. This can be implemented by unrolling the ADMM iterations or by implicit differentiation through the KKT optimality conditions of ~\cref{eq:mlp_cvx_obj}. Differentiable convex optimization layers~\citep{agrawal2019differentiable} have previously shown that convex programs can be embedded into neural architectures and differentiated end-to-end, suggesting a path toward training encoders whose latent representations are optimized for convex language separability.



%% file: appendix.tex
\section{Proof of Main Results and certificates for margin stability}
\label{app:margin}

This section links the convex program in Eq. \ref{eq:mlp_cvx_obj} and the
two-layer ReLU representation together with
computable certificates that translate directly into certified radii. 


\subsection{Proof of Lemma \ref{lem:lipschitz}}
\label{app:lemma1}
\begin{proof}
Write $f_k(h)=\sum_{j} a_{j,k}[u_j^\top h]_+$. Since $t\mapsto [t]_+$ is $1$‑Lipschitz,
$|f_k(h)-f_k(h')|\le\sum_j |a_{j,k}|\,|u_j^\top(h-h')|
\le \sum_j \|a_j\|_2\|u_j\|_2\,\|h-h'\|_2$.
Maximizing over $k$ and taking the infimum over representations yields the result.
\end{proof}


\textbf{Proof of Theorem \ref{thm:margin-stability}}
\label{proof:theorem1}
\begin{proof}
Let $k^\star=\arg\max_{k\neq y} f_k(h)$. Then
\[
\mathrm{mar}(h+\delta,y)-\mathrm{mar}(h,y)
=\big(f_y(h+\delta)-f_y(h)\big)
-\big(\max_{k\neq y}f_k(h+\delta)-f_{k^\star}(h)\big).
\]
Each difference is $\le \|f\|_{\mathrm{var}}\|\delta\|_2$ by Lemma~\ref{lem:lipschitz}, yielding \eqref{eq:margin-bound}.
\end{proof}

\subsection{Activation patterns and pattern cones}

For $u\in\mathbb{R}^{d}$ and a data matrix $X\in\mathbb{R}^{n\times d}$, define the training activation
pattern
\[
D(X,u)\;=\;\mathrm{diag}\!\big( \mathbf{1}\{Xu\ge 0\} \big)\in\{0,1\}^{n\times n}.
\]
Given a fixed pattern $D\in\{0,1\}^{n\times n}$, define its associated \emph{pattern cone}
\[
\mathcal{K}(D)\;:=\;\big\{\,v\in\mathbb{R}^{d}: (2D-I)\,X v \;\ge\; 0 \text{ (entrywise)}\big\}.
\]
Then $v\in\mathcal{K}(D)$ if and only if $D(X,v)=D$ (up to measure‑zero ties on $Xu=0$). In particular,
for $v\in\mathcal{K}(D)$ we have the identity
\begin{equation}\label{eq:gate-identity}
[X v]_+\;=\;D\,X v,
\end{equation}
where $[\cdot]_+$ is taken entrywise.

\subsection{From the convex program to a two-layer ReLU}
\label{app:mapping}

Recall the sampled‑pattern convex model in Section \ref{sec:cvxnn}:
\begin{equation}\label{eq:cvx}
\min_{\{(v_i,w_i)\}_{i=1}^P}\;
\ell\!\left(\sum_{i=1}^P D_i\,X\,(v_i-w_i),\,y\right)
\;+\;\beta\sum_{i=1}^P\big(\|v_i\|+\|w_i\|\big)
\quad\text{s.t.}\quad v_i,w_i\in\mathcal{K}(D_i).
\end{equation}
Here $D_i\in\mathcal{D}_X$ are (sampled) activation patterns. In the multi‑class case,
take $v_i,w_i\in\mathbb{R}^{d\times K}$ with columns $(v_{i,1},\dots,v_{i,K})$ etc., and interpret $\|\cdot\|$
as either the block $\ell_{2,1}$ norm, $\|M\|_{2,1}=\sum_{k=1}^K\|M_{:,k}\|_2$, or the Frobenius norm.

\subsection{Variation norm and a computable certificate}
\label{app:var-norm}

We use the standard two‑layer ReLU variation norm:
\[
\|f\|_{\mathrm{var}}
:=\inf\Big\{
\sum_{j=1}^{m}\|a_j\|_2\,\|u_j\|_2:\;
f(h)=\sum_{j=1}^{m} a_j\,[u_j^\top h]_+,\; m\in\mathbb{N}
\Big\}.
\]
The following result turns any feasible solution of \eqref{eq:cvx} into an explicit upper bound
on $\|f\|_{\mathrm{var}}$, therefore a Lipschitz certificate for the logits.

\textbf{Proof of Theorem \ref{thm:var-certificate}}

\begin{proof}
Using the representation in Proposition~\ref{prop:repr}, build a two‑layer network whose
hidden units are the \emph{columns} $\{v_{i,k}\}_{i,k}$ and $\{w_{i,k}\}_{i,k}$ with output weights
$\{+e_k\}$ and $\{-e_k\}$ respectively. For each unit $(u,a)$ in this network, the atom cost
is $\|a\|_2\|u\|_2=\|u\|_2$ because $\|e_k\|_2=1$. Summing over units gives
$\sum_{i,k}\big(\|v_{i,k}\|_2+\|w_{i,k}\|_2\big)=\sum_{i}\big(\|v_i\|_{2,1}+\|w_i\|_{2,1}\big)$,
which upper bounds $\|f\|_{\mathrm{var}}$ by definition. For Frobenius penalties,
$\sum_k\|M_{:,k}\|_2\le \sqrt{K}\|M\|_F$ yields the stated factor $\sqrt{K}$.
\end{proof}

By Lemma~\ref{lem:lipschitz},
$\|f(h)-f(h')\|_{\infty}\le \|f\|_{\mathrm{var}}\,\|h-h'\|_2$; combining with
Theorem~\ref{thm:margin-stability} yields the computable bounds
\[
\mathrm{mar}(h+\delta,y)\;\ge\;\mathrm{mar}(h,y)-2\,\widehat{\mathcal{B}}_{\mathrm{cvx}}^{(2,1)}\,\|\delta\|_2,
\quad
\mathrm{mar}(h+\delta,y)\;\ge\;\mathrm{mar}(h,y)-2\,\sqrt{K}\,\widehat{\mathcal{B}}_{\mathrm{cvx}}^{\mathrm{F}}\,\|\delta\|_2,
\]
depending on which penalty is used in the convex objective. If the encoder $E$ is $L_E$‑Lipschitz,
replace $\|\delta\|_2$ by $L_E\|x-x'\|_2$ to get the end‑to‑end certificate.

\subsection{AM--GM link to the nonconvex \texorpdfstring{$\ell_2^2$}{l2^2} penalty}
\label{app:amgm}

Consider the two‑layer model $f(h)=\sum_{j=1}^m a_j [u_j^\top h]_+$ trained via the nonconvex
penalty: $(\beta/2)\sum_{j=1}^m(\|a_j\|_2^2+\|u_j\|_2^2)$. By AM--GM,
$2\|a_j\|_2\|u_j\|_2\le \|a_j\|_2^2+\|u_j\|_2^2$, hence
\[
\|f\|_{\mathrm{var}}
\;\le\;
\frac{1}{2}\sum_{j=1}^m\big(\|a_j\|_2^2+\|u_j\|_2^2\big)
\;=\;
\frac{1}{\beta}\,\frac{\beta}{2}\sum_{j=1}^m\big(\|a_j\|_2^2+\|u_j\|_2^2\big).
\]
Therefore any solution of Eq.\,2 yields the certificate
\[
\|f\|_{\mathrm{var}}\;\le\;\frac{1}{\beta}\,\mathcal{R}_{\ell_2^2}\quad\Rightarrow\quad
\mathrm{mar}(h+\delta,y)\;\ge\;\mathrm{mar}(h,y)-\frac{2}{\beta}\,\mathcal{R}_{\ell_2^2}\,\|\delta\|_2.
\]
Larger $\beta$ tightens the bound linearly.

\subsection{Details for the logit Lipschitz bound}\label{app:lipschitz-proof}

Let $f(h)=\sum_j a_j [u_j^\top h]_+$. Since $t\mapsto [t]_+$ is $1$‑Lipschitz,
\[
|f_k(h)-f_k(h')|
=\Big|\sum_j a_{j,k}\big([u_j^\top h]_+-[u_j^\top h']_+\big)\Big|
\le \sum_j |a_{j,k}|\,|u_j^\top(h-h')|
\le \sum_j \|a_j\|_2\|u_j\|_2\,\|h-h'\|_2.
\]
Taking $\max_k$ and the infimum over all representations yields
$\|f(h)-f(h')\|_\infty\le \|f\|_{\mathrm{var}}\|h-h'\|_2$, which is the Lemma used in
Theorem~\ref{thm:margin-stability}.

\section{Related Work Continues}
\label{app:related}

\paragraph{Training Data Scarcity.} Contemporary studies \citep{reitmaier2022opportunities} have identified lack of training data as a dominant challenge to improving performance in speech models. To address this, authors \citep{babirye2022building} have worked on building partnerships to document valuable linguistic data by remotely engaging participants to record themselves, identifying more recording opportunities, and categorizing challenges of ASR in deeply multicultural communities. This has uncovered valuable implications for collaborations across ASR and Human Computer Interface (HCI) that advance important discussions and yielded more diverse speech datasets. However, this approach also brings up new questions on the ethics of analyzing community voice recordings through platforms such as WhatsApp \citep{barbosa2019not}, and is slow to provide clearly annotated training data from global low-resource languages. 

\paragraph{Certified Robustness in Deep Learning.}
Ensuring model reliability under input perturbations is a critical area of study, which is typically addressed via empirical defenses like adversarial training \citep{madry2017towards}. However, empirical defenses often do not provide formal guarantees. Research  in certified robustness and information theory aim to provide provable bounds on classifier consistency (e.g., via randomized smoothing or Lipschitz analysis) \citep{cohen2019certified, tsuzuku2018lipschitz}. In the audio domain, robustness is usually evaluated against environmental noise rather than dialectal shifts. By leveraging the specific geometry of our convex reformulation, we derive a variation-norm certificate that bounds the Lipschitz constant of the detection head directly. Unlike black-box certification methods, our approach yields a constructive certificate of margin stability derived explicitly from the optimization objective.

\paragraph{GPU Accelerated JAX Methods.} The importance of GPU acceleration has fueled much of the success in contemporary optimization and deep learning. The strong parallelization and speed potential of this framework has been demonstrated in the optimization works of \citet{gu2025jaxbte}, \citet{li2025jaxmpm}, and \citet{toutlini2025jax}. This motivates us to solve the convex reformulation with a batched, multi-GPU ADMM solver \citep{boyd2011distributed}, yielding rapid convergence and efficient use of modern accelerators. As a result, our open-source implementation is easy to reproduce and plug into existing ASR pipelines or other encoder–decoder architectures. This design also aligns CLD with recent JAX-based convex and LLM systems such as CRONOS \citep{feng2024cronos}, and opens a clear path for future work on scaling convex language detection to larger encoders, cloud TPUs, and broader multilingual speech and multimodal agent application settings.

\section{Alternating Direction Method of Multipliers}
\label{app:admm}

The Alternating Direction Method of Multipliers (ADMM) is a divide-and-conquer optimization algorithm that has demonstrated significant prominence in solving large-scale convex optimization problems, particularly those involving decomposable objective functions and constraints.

\paragraph{ADMM Background.}
ADMM is an optimization framework designed to efficiently solve convex minimization problems that involve complex or non-smooth regularization terms. It works by breaking down a large global problem into a sequence of smaller, easier-to-solve local subproblems, which are then solved iteratively and linked together via Lagrange multipliers (the "multipliers" component) and a quadratic penalty term (the "augmented Lagrangian" component). This structure is ideal when the objective function is composed of several parts that apply to different subsets of variables, making the problem separable.

\paragraph{Novelty and Significance.}
In the context of Convex Language Detection (CLD), ADMM is critical and represents a significant novelty since it provides:
\begin{itemize}
    \item \textbf{Tractability for Scale:} CLD is derived from a convex reformulation of a two-layer ReLU network. While theoretically sound, solving the resulting optimization problem (Eq. \ref{eq:mlp_cvx_obj}) can be extremely challenging for high-dimensional data, such as the extracted ASR hidden features. ADMM makes this problem tractable by allowing the optimization to be performed in a batched, multi-GPU parallel fashion (especially when implemented in JAX). ADMM is perfectly suited for solving such problems by isolating the non-smooth term in one subproblem (the proximal update).
    
    \item \textbf{Guaranteed Global Optimality:} By solving the convex program using a principled method like ADMM, the training process is guaranteed to converge to the unique global optimum. This eliminates the dependency on brittle hyperparameters (like learning rates) and avoids the local minima issues that plague traditional neural network training (e.g., the standard MLP baseline).
    
    \item \textbf{Novelty in ASR/Speech:} While ADMM is well-known in control and optimization \citep{boyd2011distributed}, we assert that its practical application as a core training method for complex deep learning reformulations on large-scale spoken dialogue systems remains highly novel. This innovation allows CLD to achieve massive compute efficiency (requiring 13x fewer TFLOPs than the standard vanilla-NN) and drastically reduced training time.
\end{itemize}

\clearpage

\section{Extended Experimental Results and Discussion}
\label{app:plots_metrics}

\begin{figure}[H]
  \begin{center}

    \centerline{\includegraphics[width=0.5\columnwidth]{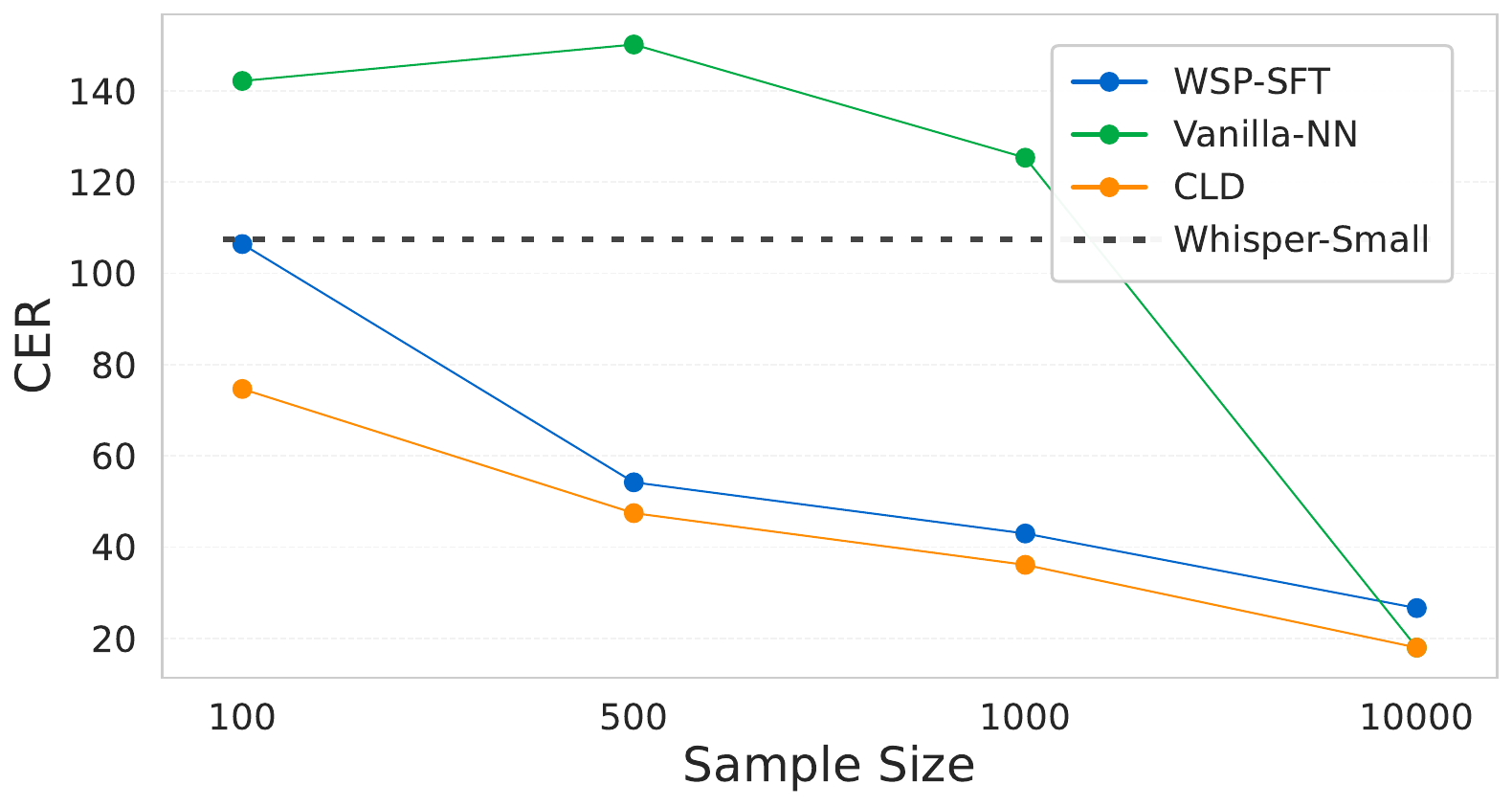}}

    \caption{
      Character error rates (CER) of Whisper-small default, Whisper-small fine-tuned,
      vanilla-NN, and CLD on binary classification between English and Chinese across
      training sample sizes of \num{100}, \num{500}, \num{1000}, and \num{10000}.
    }
    \label{fig:cer_binary}

  \end{center}
\end{figure}

\begin{figure}[htbp]
    \centering
    \begin{subfigure}{0.48\textwidth}
        \centering
        \includegraphics[width=\textwidth]{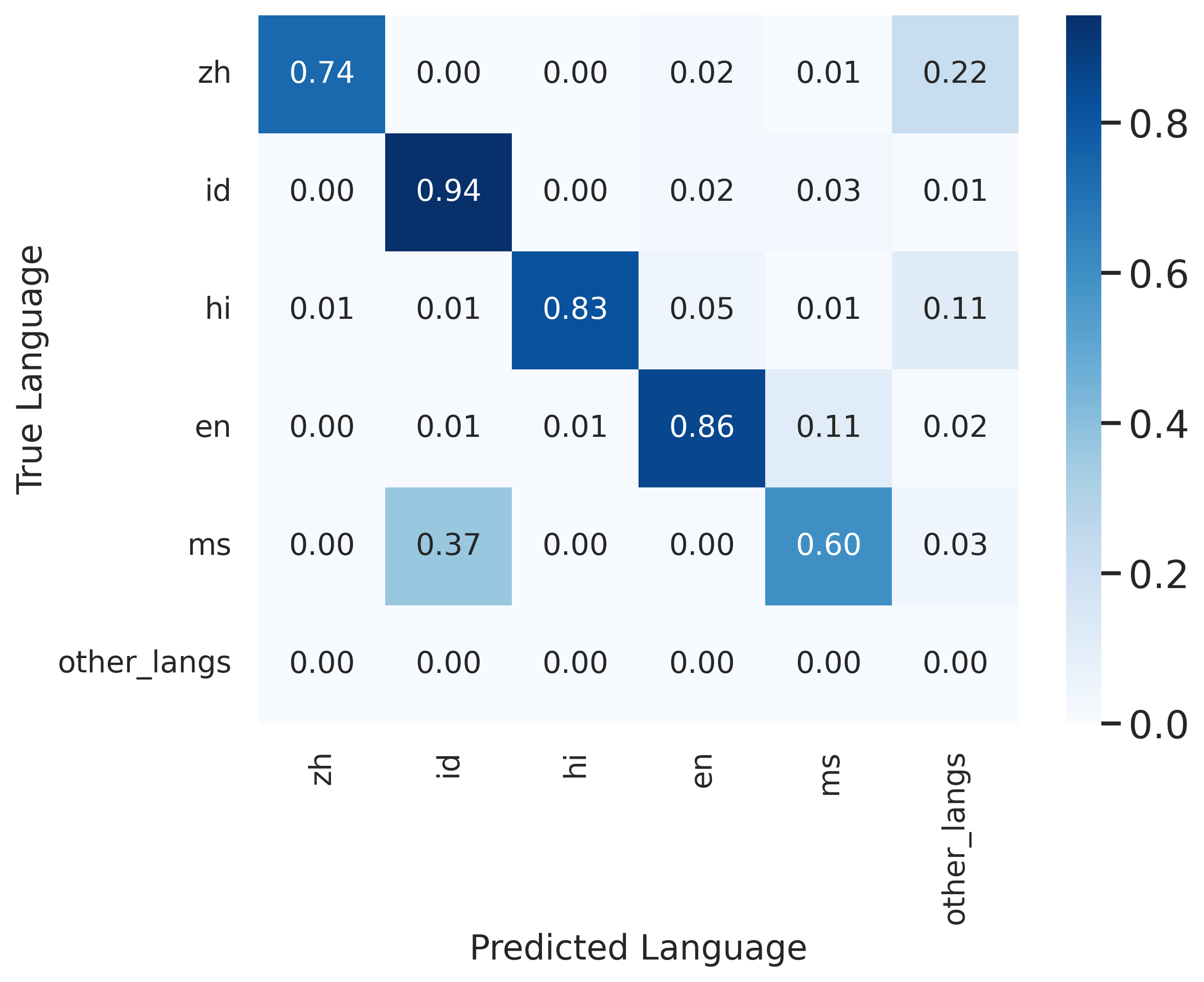}
        \label{fig:cvx_cm}
    \end{subfigure}
    \hfill
    \begin{subfigure}{0.48\textwidth}
        \centering
        \includegraphics[width=\textwidth]{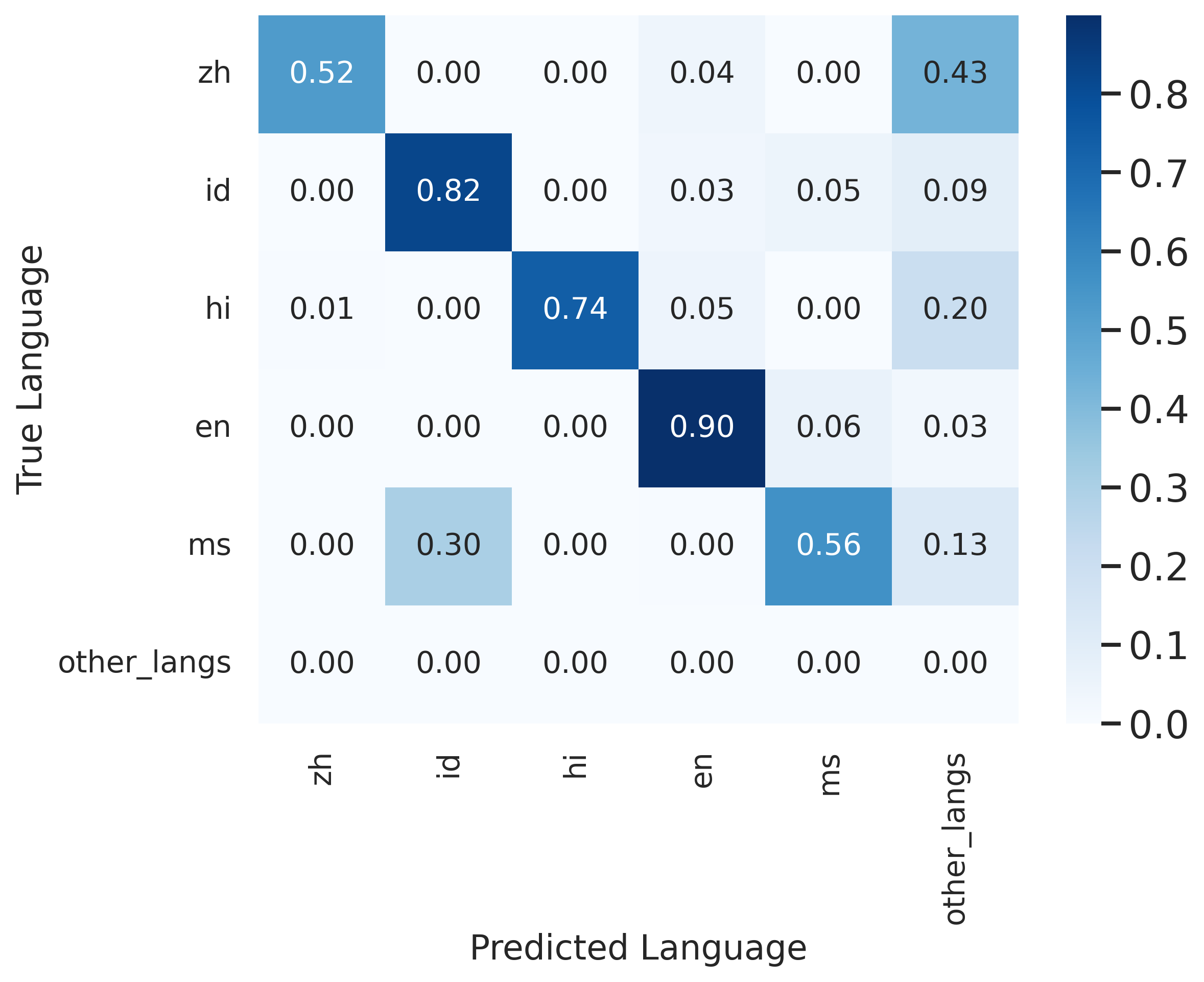}
        \label{fig:nn_cm}
    \end{subfigure}
    \caption{Language Classification Performance between WSP-SFT (left) and WSP (right) on Whisper-small across English (en), Chinese (zh), Indonesian (id), Malaysian (ms), Hindi (hi), and other predicted languages.}
    \label{fig:confusion_matrices}
\end{figure}

\clearpage

\subsection{Classification Accuracy per Sample Size Ablation}
\label{appsec:more_tables}

\begin{table*}[!b]
\caption{Multi-dialect classification accuracy between English (EN) and Chinese (ZH) across \num{10} accents for (Whisper-baseline) WSP, WSP-SFT, vanilla-NN, and CLD at \num{100} samples per language and \num{20} samples per dialect.}
\label{tab:accent-prediction-100}
\centering
\small
\scshape
\begin{tabular}{
l c
ccc>{\columncolor{yellow!20}}c
ccc>{\columncolor{yellow!20}}c
}
\toprule
Language - Dialect &
Size &
\multicolumn{4}{c}{Correctly Predicted Samples} &
\multicolumn{4}{c}{Accuracy} \\
\cmidrule(lr){3-6} \cmidrule(lr){7-10}
&  &
WSP & WSP-SFT & NN & CLD &
WSP & WSP-SFT & NN & CLD \\
\midrule
EN-Hindi & 190 & 176 & 93 & 0 & 185 & 0.9263 & 0.9263 & 0.0000 & \textbf{0.9737} \\
EN-Malaysian & 215 & 136 & 137 & 5 & 213 & 0.6326 & 0.6372 & 0.0233 & \textbf{0.9907} \\
EN-Singaporean & 205 & 166 & 166 & 0 & 203 & 0.8098 & 0.8098 & 0.0000 & \textbf{0.9902} \\
EN-Pakistani & 189 & 182 & 182 & 10 & 186 & 0.9630 & 0.9630 & 0.0529 & \textbf{0.9841} \\
EN-American & 204 & 195 & 195 & 4 & 203 & 0.9559 & 0.9559 & 0.0196 & \textbf{0.9951} \\
ZH-Min Dong / Fuzhou & 71 & 7 & 7 & 64 & 71 & 0.0986 & 0.0986 & 0.9014 & \textbf{1.0000} \\
ZH-Pu-Xian & 216 & 32 & 33 & 193 & 213 & 0.1481 & 0.1528 & 0.8935 & \textbf{0.9861} \\
ZH-Hong Kong & 184 & 121 & 121 & 172 & 175 & 0.6576 & 0.6576 & 0.9348 & \textbf{0.9511} \\
ZH-Taiwanese & 181 & 176 & 175 & 174 & 181 & 0.9724 & 0.9669 & 0.9613 & \textbf{1.0000} \\
ZH-Mainland & 205 & 187 & 187 & 195 & 199 & 0.9122 & 0.9122 & 0.9512 & \textbf{0.9707} \\
\midrule
Total & 1860 & 1378 & 1286 & 817 & 1829 & 0.7077 & 0.7102 & 0.4738 & \textbf{0.9742} \\
\bottomrule
\end{tabular}
\vskip -0.1in
\end{table*}

\begin{table*}[!b]
\caption{Multi-dialect classification accuracy between English (EN) and Chinese (ZH) across \num{10} accents for (Whisper-baseline) WSP, WSP-SFT, vanilla-NN, and CLD at \num{1000} samples per language and \num{200} samples per dialect.}
\label{tab:accent-prediction-1000}
\centering
\small
\scshape
\begin{tabular}{
l c
ccc>{\columncolor{yellow!20}}c
ccc>{\columncolor{yellow!20}}c
}
\toprule
Language - Dialect &
Size &
\multicolumn{4}{c}{Correctly Predicted Samples} &
\multicolumn{4}{c}{Accuracy} \\
\cmidrule(lr){3-6} \cmidrule(lr){7-10}
&  &
WSP & WSP-SFT & NN & CLD &
WSP & WSP-SFT & NN & CLD \\
\midrule
EN-Hindi & 190 & 171 & 175 & 190 & 185 & 0.9000 & 0.9211 & \textbf{1.0000} & 0.9737 \\
EN-Malaysian & 215 & 123 & 115 & 215 & 214 & 0.5714 & 0.5349 & \textbf{1.0000} & 0.9953 \\
EN-Singaporean & 205 & 164 & 154 & 205 & 203 & 0.8000 & 0.7512 & \textbf{1.0000} & 0.9902 \\
EN-Pakistani & 189 & 178 & 180 & 189 & 187 & 0.9444 & 0.9524 & \textbf{1.0000} & 0.9894 \\
EN-American & 204 & 178 & 195 & 204 & 203 & 0.9444 & 0.9559 & \textbf{1.0000} & 0.9951 \\
ZH-Min Dong / Fuzhou & 71 & 19 & 30 & 15 & 69 & 0.2727 & 0.4225 & 0.2113 & \textbf{0.9718} \\
ZH-Pu-Xian & 216 & 46 & 82 & 56 & 216 & 0.2143 & 0.3796 & 0.2593 & \textbf{1.0000} \\
ZH-Hong Kong & 184 & 124 & 143 & 57 & 182 & 0.6667 & 0.7772 & 0.3098 & \textbf{0.9891} \\
ZH-Taiwanese & 181 & 159 & 181 & 39 & 181 & 0.8824 & \textbf{1.0000} & 0.2155 & \textbf{1.0000} \\
ZH-Mainland & 205 & 178 & 192 & 55 & 204 & 0.8696 & 0.9366 & 0.2683 & \textbf{0.9951} \\
\midrule
Total & 1860 & 1314 & 1447 & 1225 & 1844 & 0.7066 & 0.7632 & 0.6586 & \textbf{0.9914} \\
\bottomrule
\end{tabular}
\vskip -0.1in
\end{table*}

\begin{table*}[!b]
\caption{Multi-dialect classification accuracy between English (EN) and Chinese (ZH) across \num{10} accents for (Whisper-baseline) WSP, WSP-SFT, vanilla-NN, and CLD at \num{10000} samples per language and \num{2000} samples per dialect.}
\label{tab:accent-prediction-10000}
\centering
\small
\scshape
\begin{tabular}{
l c
cc>{\columncolor{yellow!20}}cc
cc>{\columncolor{yellow!20}}cc
}
\toprule
Language - Dialect &
Size &
\multicolumn{4}{c}{Correctly Predicted Samples} &
\multicolumn{4}{c}{Accuracy} \\
\cmidrule(lr){3-6} \cmidrule(lr){7-10}
&  &
WSP & WSP-SFT & NN & CLD &
WSP & WSP-SFT & NN & CLD \\
\midrule
EN-Hindi & 190 & 176 & 180 & 190 & 180 & 0.9263 & 0.9474 & \textbf{1.0000} & 0.9474 \\
EN-Malaysian & 215 & 136 & 135 & 215 & 194 & 0.6326 & 0.6279 & \textbf{1.0000} & 0.9023 \\
EN-Singaporean & 205 & 166 & 166 & 198 & 195 & 0.8098 & 0.8098 & \textbf{0.9659} & 0.9512 \\
EN-Pakistani & 189 & 182 & 185 & 189 & 183 & 0.9630 & 0.9788 & \textbf{1.0000} & 0.9683 \\
EN-American & 204 & 195 & 197 & 197 & 194 & 0.9559 & \textbf{0.9657} & \textbf{0.9657} & 0.9509 \\
ZH-Min Dong / Fuzhou & 71 & 7 & 50 & 71 & 71 & 0.0986 & 0.7042 & \textbf{1.0000} & \textbf{1.0000} \\
ZH-Pu-Xian & 216 & 32 & 198 & 214 & 216 & 0.1481 & 0.9167 & 0.9907 & \textbf{1.0000} \\
ZH-Hong Kong & 184 & 121 & 170 & 173 & 184 & 0.6576 & 0.9239 & 0.9402 & \textbf{1.0000} \\
ZH-Taiwanese & 181 & 176 & 181 & 181 & 181 & 0.9724 & \textbf{1.0000} & \textbf{1.0000} & \textbf{1.0000} \\
ZH-Mainland & 205 & 187 & 195 & 205 & 205 & 0.9122 & 0.9512 & \textbf{1.0000} & \textbf{1.0000} \\
\midrule
Total & 1860 & 1378 & 1657 & 1833 & 1803 & 0.7077 & 0.8909 & \textbf{0.9855} & 0.9694 \\
\bottomrule
\end{tabular}
\vskip 1in
\end{table*}

\clearpage

\subsection{Impact of Hyperparameter Tuning on Vanilla NN Performance}
\label{app:hp-tuning}

This appendix provides the detailed breakdown of performance changes following hyperparameter tuning for the Vanilla NN classifier across various sample sizes and base models.

\begin{table}[h]
\centering
\caption{Multi-dialect classification accuracy between English (EN) and Chinese (ZH) for vanilla-NN following hyperparameter tuning across 100, 500, 1000, and 10000 samples. Change in accuracy from the non-tuned model is shown in parentheses.}
\label{tab:app-nn-acc-delta}
\small
\scshape
\begin{tabular}{l c rrrr}
\toprule
Dialect & Size & 100 Samples & 500 Samples & 1000 Samples & 10000 Samples \\
\midrule
EN-Hindi       & 190 & 0.0000 \textcolor{red}{(-3.7\%)} & 1.0000 (0.0\%) & 1.0000 (0.0\%) & 1.0000 \textcolor{green!70!black}{(+1.6\%)} \\
EN-Malaysian   & 215 & 0.0233 \textcolor{red}{(-0.9\%)} & 0.9953 \textcolor{red}{(-0.5\%)} & 1.0000 (0.0\%) & 1.0000 (0.0\%) \\
EN-Singaporean & 205 & 0.0000 \textcolor{red}{(-1.5\%)} & 0.9756 \textcolor{red}{(-2.4\%)} & 1.0000 (0.0\%) & 0.9659 \textcolor{red}{(-2.9\%)} \\
EN-Pakistani   & 189 & 0.0529 (0.0\%) & 1.0000 (0.0\%) & 1.0000 (0.0\%) & 1.0000 \textcolor{green!70!black}{(+0.5\%)} \\
EN-American    & 204 & 0.0196 \textcolor{red}{(-3.9\%)} & 0.9755 \textcolor{red}{(-2.5\%)} & 1.0000 (0.0\%) & 0.9657 \textcolor{red}{(-3.4\%)} \\
\addlinespace
ZH-Min Dong    & 71  & 0.9014 \textcolor{red}{(-4.2\%)} & 0.2535 \textcolor{red}{(-4.2\%)} & 0.2113 \textcolor{green!70!black}{(+5.6\%)} & 1.0000 \textcolor{green!70!black}{(+1.4\%)} \\
ZH-Pu-Xian     & 216 & 0.8935 \textcolor{red}{(-5.1\%)} & 0.2037 (0.0\%) & 0.2593 \textcolor{green!70!black}{(+4.2\%)} & 0.9907 \textcolor{red}{(-0.5\%)} \\
ZH-Hong Kong   & 184 & 0.9348 \textcolor{red}{(-1.6\%)} & 0.0000 \textcolor{red}{(-2.7\%)} & 0.3098 \textcolor{green!70!black}{(+2.2\%)} & 0.9402 \textcolor{red}{(-0.5\%)} \\
ZH-Taiwanese   & 181 & 0.9613 \textcolor{red}{(-3.3\%)} & 0.0884 \textcolor{green!70!black}{(+2.2\%)} & 0.2155 \textcolor{green!70!black}{(+3.3\%)} & 1.0000 (0.0\%) \\
ZH-Mainland    & 205 & 0.9512 \textcolor{red}{(-2.9\%)} & 0.0878 \textcolor{green!70!black}{(+2.4\%)} & 0.2683 \textcolor{green!70!black}{(+4.9\%)} & 1.0000 \textcolor{green!70!black}{(+3.9\%)} \\
\midrule
\textbf{Total} & 1860 & 0.4738 \textcolor{red}{(-2.7\%)} & 0.5580 \textcolor{red}{(-0.8\%)} & 0.6264 \textcolor{green!70!black}{(+2.0\%)} & 0.9862 (0.0\%) \\
\bottomrule
\end{tabular}
\end{table}

\begin{table}[h]
\centering
\caption{WER, CER, and Detection Accuracy metrics between WSP, WSP-L, and MMS-1B for vanilla-NN following hyperparameter tuning. Change in accuracy from the non-tuned model is shown in parentheses.}
\label{tab:app-nn-model-delta}
\scshape
\begin{tabular}{l lll}
\toprule
Base Model & Tuned Acc & Tuned WER & Tuned CER \\
\midrule
WSP    & 0.7737 \textcolor{green!70!black}{(+1.6\%)} & 53.84 \textcolor{green!70!black}{(-4.74)} & 34.52 \textcolor{green!70!black}{(-2.61)} \\
WSP-L  & 0.9605 \textcolor{red}{(-0.9\%)}           & 29.25 \textcolor{red}{(+0.42)}           & 15.99 \textcolor{red}{(+0.33)} \\
MMS-1B & 0.8612 \textcolor{green!70!black}{(+0.1\%)} & 48.26 \textcolor{green!70!black}{(-0.02)} & 23.64 \textcolor{green!70!black}{(-0.02)} \\
\bottomrule
\end{tabular}
\end{table}

\subsection{Low-Resource Dialects in High-Resource Languages.} \label{sec:bias}
Although large-scale audio–text datasets have driven recent advances in ASR, high-quality speech data remains costly to collect and difficult to annotate, particularly for languages with substantial dialectal variation. Even the seminal corpora underlying modern ASR models (e.g., the reported \num{680000}+ hours used to train Whisper by \citet{radford2023robust}) concentrate heavily on a small set of dominant high-resource speech within English and Mandarin, leaving many regional dialects and accented speech varieties underrepresented. This imbalance yields a critical performance gap: speakers of dialects such as Singaporean English (“Singlish”) or regional Mandarin varieties experience significantly degraded transcription quality. For example, Table~\ref{tab:appendix-accent-acc-full} shows Whisper achieving \num{100}\% accuracy on Midwestern English but only \num{61.4}\% on Malaysian-accented English, despite both being the same language. These disparities highlight that even widely adopted ASR models frequently misidentify dialectal speech, limiting accessibility, reliability and trust for millions of users.

\section{Language classification accuracy of Baseline Whisper ASR}
\label{app:baselines}

{\footnotesize
\scshape
\begin{longtable}{l l r r r}
\caption{Whisper-small's default language detection accuracy by language and accent.}
\label{tab:appendix-accent-acc-full}\\
\toprule
\textbf{Language} & \textbf{Accent} & \textbf{Samples} & \textbf{Correct} & \textbf{Accuracy (\%)} \\
\midrule
\endfirsthead
\toprule
\textbf{Language} & \textbf{Accent} & \textbf{Samples} & \textbf{Correct} & \textbf{Accuracy (\%)} \\
\midrule
\endhead
\bottomrule
\endfoot

\multicolumn{5}{l}{\textbf{ID}}\\
 & Betawi      & 505 & 451 & 89.3 \\
 & Javanese    & 182 & 163 & 89.6 \\
 & Jawa Tengah & 228 & 206 & 90.4 \\
 & Surakarta   & 221 & 200 & 90.5 \\
 & Bindeng     & 221 & 200 & 90.5 \\
 & Tionghoa    & 221 & 200 & 90.5 \\
 & Medhok      & 224 & 203 & 90.6 \\
\addlinespace

\multicolumn{5}{l}{\textbf{EN}}\\
 & Malaysian English             & 1000 & 614  & 61.4 \\
 & Filipino                      & 1000 & 745  & 74.5 \\
 & Singaporean English           & 1000 & 752  & 75.2 \\
 & Zimbabwe                      & 1000 & 831  & 83.1 \\
 & Southern African (South Africa, Namibia) & 1000 & 831 & 83.1 \\
 & Welsh English                 & 1000 & 918  & 91.8 \\
 & Scandinavian                  & 1000 & 952  & 95.2 \\
 & Pakistan                      & 1000 & 967  & 96.7 \\
 & India and South Asia (India, Sri Lanka) & 1000 & 967 & 96.7 \\
 & Scottish English              & 1000 & 968  & 96.8 \\
 & Lancashire English            & 1000 & 970  & 97.0 \\
 & Liverpool English             & 1000 & 970  & 97.0 \\
 & England English               & 1000 & 982  & 98.2 \\
 & Australian English            & 1000 & 984  & 98.4 \\
 & United States English         & 1000 & 985  & 98.5 \\
 & New Zealand English           & 1000 & 987  & 98.7 \\
 & Hong Kong English             & 1000 & 988  & 98.8 \\
 & German English                & 1000 & 996  & 99.6 \\
 & Non native speaker            & 1000 & 996  & 99.6 \\
 & Irish English                 & 1000 & 996  & 99.6 \\
 & Canadian English              & 1000 & 999  & 99.9 \\
 & Northern Irish                & 1000 & 999  & 99.9 \\
 & Low                           & 1000 & 999  & 99.9 \\
 & Demure                        & 1000 & 999  & 99.9 \\
 & Midwestern                    & 1000 & 1000 & 100.0 \\
 & Transatlantic English         & 1000 & 1000 & 100.0 \\
\addlinespace

\multicolumn{5}{l}{\textbf{ZH}}\\
 & cdo    & 1000 & 103 & 10.3 \\
 & cpx    & 1000 & 111 & 11.1 \\
 & nan-tw & 1000 & 545 & 54.5 \\
 & hk     & 1000 & 905 & 90.5 \\
 & zh     & 1000 & 910 & 91.0 \\
 & tw     & 1000 & 987 & 98.7 \\
\addlinespace

\multicolumn{5}{l}{\textbf{HI}}\\
 & Kashmiri  & 320 & 185 & 57.8 \\
 & Bodo      & 380 & 270 & 71.1 \\
 & Malayalam & 365 & 271 & 74.2 \\
 & Punjabi   & 236 & 178 & 75.4 \\
 & Urdu      & 131 & 103 & 78.6 \\
 & Telugu    & 474 & 377 & 79.5 \\
 & Tamil     & 182 & 145 & 79.7 \\
 & Hindi     & 575 & 468 & 81.4 \\
 & Sindhi    & 141 & 116 & 82.3 \\
 & Odia      & 374 & 311 & 83.2 \\
 & Kannada   & 313 & 268 & 85.6 \\
 & Gujarati  & 299 & 257 & 86.0 \\
 & Assamese  & 262 & 226 & 86.3 \\
 & Konkani   & 422 & 364 & 86.3 \\
 & Nepali    & 359 & 311 & 86.6 \\
 & Bengali   & 418 & 366 & 87.6 \\
 & Dogri     & 219 & 194 & 88.6 \\
 & Maithili  & 287 & 255 & 88.9 \\
 & Marathi   & 395 & 366 & 92.7 \\
\addlinespace

\multicolumn{5}{l}{\textbf{MS}}\\
 & msi & 1000 & 386 & 38.6 \\
 & ms  & 1000 & 934 & 93.4 \\
\end{longtable}
}


\section{Qualitative Human Case Study}

\begin{table}[H]
\footnotesize
\scshape
\centering
\captionsetup{justification=raggedright,singlelinecheck=false}
\setlength{\tabcolsep}{6pt}

\begin{tabularx}{0.75\columnwidth}{l| Y Y Y}
\toprule
\textbf{Method} &
\textbf{Total Test Prompts} &
\textbf{Wrong Language Transcribed} &
\textbf{Word Errors in Transcription} \\
\midrule

\textit{Default} & & & \\
EN & 595 & 59  & -- \\
ZH & 300 & 148 & -- \\
\midrule

\textit{vanilla-NN} & & & \\
EN & 450 & 22 & 81 \\
ZH & 450 &  5 & 14 \\
\midrule

\textit{CLD (ours)} & & & \\
\rowcolor{yellow!25}
EN & 450 & 12 & 26 \\
\rowcolor{yellow!25}
ZH & 450 &  2 & 14 \\
\bottomrule

\end{tabularx}
\vskip 0.1in
\caption{Illustrative human case-study comparison on Whisper-small using the Default, vanilla-NN, and CLD language classifiers. The study involved five
participants in Singapore for English (EN) and ten participants in southeastern China for Mandarin (ZH).}
\label{tab:human-feedback-combined}
\end{table}

\section{Hyperparameter Selection and Hardware Setup}
\label{app:hyper}

In all models, we tune the hyperparameter by choosing the best configuration based on performance over the validation set. We split using a 80-10-10 train, test, validation split. 

\textbf{CLD parameters.} Since one of the features of CLD is its lack of brittle hyperparameters, we implement design choices with all suggested default parameters presented from the subroutines utilized. For binary classification, we set rank $=20$, $\beta=10^{-3}$, $\rho=10^{-4}$, $\gamma$-ratio $=1$, ADMM iterations $=6$, PCG iterations $=32$, with neuron count $=10$. 
For multiclass classification, we use the same settings but set the neuron count to $32$.

\textbf{Neural Network Baseline.}
  The NN baseline was trained using AdamW with hyperparameters selected via grid search over learning rates $\{10^{-4}, 3\times10^{-4},
  10^{-3}, 3\times10^{-3}\}$, weight decays $\{0, 10^{-5}, 10^{-4}, 10^{-3}\}$, and epoch counts $\{5, 10, 20\}$. The best configuration was selected based on validation accuracy.

  \textbf{Linear SVM Baseline.}
  We used a LinearSVC classifier with a maximum of $5000$ iterations. The inverse regularization parameter $C$ was selected via grid
  search over $\{10^{-2}, 10^{-1}, 1, 10, 100\}$. Input embeddings were standardized with zero mean and unit variance prior to training.

  \textbf{Kernel SVM Baseline.}
  We used an SVC classifier with input embeddings standardized prior to training. Hyperparameters were selected via grid search over
  kernel types $\{\texttt{rbf}, \texttt{poly}, \texttt{sigmoid}\}$, regularization values $C \in \{0.1, 1, 10, 100\}$, and kernel
  coefficients $\gamma \in \{\texttt{scale}, \texttt{auto}\}$. The polynomial kernel was evaluated at degree $3$. The best configuration
  was selected based on validation accuracy.

  \textbf{$k$-Nearest Neighbours Baseline.}
  We used a $k$-NN classifier with input embeddings standardized prior to training. Hyperparameters were selected via grid or random
  search over $k \in \{3, 5, 7, 11, 15, 21\}$, distance metrics $\{\texttt{euclidean}, \texttt{cosine}, \texttt{manhattan}\}$, and
  weighting schemes $\{\texttt{uniform}, \texttt{distance}\}$. The best configuration was selected based on validation accuracy.

\textbf{Hardware.}
All experiments were conducted on four NVIDIA A100-SXM4 GPUs, each with 40\,GB of memory.